\documentclass[wcp,cleveref]{jmlr}

\newif\ifanon
\anonfalse

\usepackage[T1]{fontenc}

\usepackage{booktabs}
\usepackage{enumitem}
\providecommand{\qedhere}{}

\newcommand{\Deff}{D_{\mathrm{eff}}}
\newcommand{\Neff}{N_{\mathrm{eff}}}
\newcommand{\betamix}{\beta}
\newcommand{\CKA}{\mathrm{CKA}}
\newcommand{\BI}{\mathrm{BI}}
\newcommand{\RR}{\mathbb{R}}
\newcommand{\EE}{\mathbb{E}}

\newcommand{\lead}[1]{\noindent\textbf{#1}\enspace}

\renewcommand{\rho}{\varrho}

\jmlryear{2026}
\jmlrworkshop{ACML 2026}
\editors{Andy Song, Bo Han and Sarah Erfani}

\title[Residual Stream's Depth]{The Residual Stream's Effective Depth}

\author{%
  \Name{Barak Gahtan} \Email{barakgahtan@cs.technion.ac.il}\\
  \addr Technion Israel Institute of Technology
  \AND
  \Name{Ido Galil} \Email{igalil@nvidia.com}\\
  \addr Nvidia
  \AND
  \Name{Alex M. Bronstein} \Email{bron@cs.technion.ac.il}\\
  \addr Technion Israel Institute of Technology \& ISTA Institute of Science and Technology Austria}

\begin{document}

\maketitle

\begin{center}
  \itshape Accepted for publication at the Asian Conference on Machine
  Learning (ACML 2026); to appear in Proceedings of Machine Learning
  Research (PMLR).
\end{center}

\begin{abstract}
We introduce \emph{effective depth} ($\Deff$), a scalar diagnostic that treats the layer-wise residual stream of a transformer as a discrete-time process, measures how representation similarity decays with layer distance, and aggregates that profile into one number.  Across sixteen decoder-only language models, $\Deff$ separates a structural consequence of residual accumulation from an empirical one: even maximally diverse orthogonal updates have the closed-form reference $F_L = 2L/(L+1)<2$, yet fifteen of sixteen default measurements lie below $F_L$ (Qwen3.5: 32--44\%, OLMo-2: 40--41\%, Pythia: 23--28\%).  Matched references show that the gap is not caused by the persistent initial state or update-size imbalance, but is largely a calibrated signature of correlated residual updates rather than evidence that depth is unused.  Symmetric position-0, token-normalisation, and top-PC controls show the regime is not reducible to BOS or top-PC artefacts: the lone above-reference default outlier joins the same regime, and all sixteen models are sub-reference after token-normalisation or top-1-PC removal.  Intermediate checkpoints show that the regime is established early in OLMo-2 and stable through 5T tokens, while Pythia-1.4B follows a distinct decreasing trajectory.  A controlled residual-carry intervention supports the mechanism, and $\Deff$ is best read as a \emph{global} accumulated-state diagnostic, not as a capability score or pruning method.
\end{abstract}

\begin{keywords}
  effective depth; residual stream; transformer interpretability; layer
  redundancy; centered kernel alignment; large language models
\end{keywords}

\section{Introduction}
\label{sec:intro}

The depth of a neural network is its most celebrated architectural parameter.
Theoretical analyses of deep learning routinely invoke depth as a proxy for
expressive power~\citep{telgarsky2016benefits,raghu2017expressive}, and
practitioners add layers as a first resort when models underfit.
Yet in the transformer era, a growing body of evidence suggests that
nominal depth $L$ is a poor surrogate for the number of statistically
distinguishable residual states a model actually traverses.
Studies of layer pruning~\citep{men2024shortgpt,gromov2024unreasonable},
representation similarity~\citep{kornblith2019similarity,raghu2021vision},
and skip-layer routing~\citep{elhoushi2024layerskip} collectively demonstrate
that large portions of a trained stack can be removed with negligible performance
degradation, while
\citet{veit2016residual} showed that residual networks behave as ensembles of
shallow networks whose effective gradient paths are far shorter than the
nominal depth.
Recent work confirms this pattern in modern LLMs: \citet{sun2025curse} show
that Pre-LayerNorm causes deeper layers to converge toward identity
transformations, and \citet{razzhigaev2024linear} find near-perfect linear
relationships (Procrustes similarity~$>0.99$) between adjacent layer
representations.
This raises a fundamental question:

\medskip
\begin{quote}
\itshape
Given a transformer with $L$ layers, how many statistically distinguishable
residual states does it traverse, and why?
\end{quote}
\medskip

Answering this question requires a definition that (i) is grounded in a
well-motivated diagnostic framework rather than ad hoc heuristics,
(ii) applies uniformly across decoder-only language models, and
(iii) connects to interpretable quantities such as layer redundancy.

\lead{Our approach.}
We recast the residual stream $(h_1, \dots, h_L) \in \RR^{L \times d}$
as a \emph{discrete-time process} indexed by layer, measure its
similarity autocorrelation $\hat\rho(k)$, and aggregate the full
layer-distance profile into a scalar.
We measure $\Deff$ on the \emph{accumulated} hidden states $h_\ell$ rather
than the per-layer updates $f_\ell = h_\ell - h_{\ell-1}$, because the
question we address concerns the distinct information states that downstream
computation receives (Section~\ref{sec:method}):
\begin{equation}
  \Deff \;=\; \frac{L}{1 + 2\sum_{k=1}^{L-1}\!\left(1-\tfrac{k}{L}\right)\hat\rho(k)}.
  \label{eq:deff_main}
\end{equation}
When layers are independent ($\hat\rho(k)=0$), $\Deff = L$;
when perfectly correlated ($\hat\rho(k)=1$), $\Deff = 1$.
This formulation directly connects the degree of inter-layer redundancy
to a single interpretable scalar.  The diagnostic has \textbf{four concrete uses:}
it prevents the mistaken reading that low normalized depth means a model
uses only a few layers; gives a global geometry summary for comparing
families, checkpoints, and architectural variants; surfaces
position-0, norm-spike, and top-PC anomalies when paired with controls;
and separates model-level residual-stream behaviour from local layer
decisions such as pruning.  In this sense, $\Deff$ is best read as a
calibrated diagnostic for residual-stream geometry: it tells us when a
stack follows the familiar residual-accumulation regime and when a new
architecture or checkpoint moves off that map.

\lead{What this paper does and does not claim.}
A reader who sees a normalized ratio of $\Deff/L < 0.05$ across 7B+ models
might infer that trained transformers ``waste'' most of their depth.
That inference is not what we report.  In an idealized
residual-accumulation regime with mutually orthogonal per-layer updates,
the closed-form value is $\Deff = 2L/(L+1) < 2$ at every $L$
(Section~\ref{sec:experiments},
Appendix~S2), so any model whose accumulated
states approximate this regime will have $\Deff/L = O(1/L)$ by
construction.  We use this value as an idealized
maximally-diverse-update reference, not as a null distribution or a
target.

The empirical content is the additional gap below this
reference, but this gap should be decomposed rather than overread.  We
therefore add matched references that retain measured update-size
profiles, the persistent initial state, and the measured update
similarity profile.  These controls show that the sub-$F_L$ gap is not
caused by $h_0$ carry or update-size imbalance; once measured update
similarities are retained, the scalar reference is more redundant than
the observed accumulated stream.  We therefore lead with absolute
$\Deff$ and gap-to-$F_L$, but interpret the small ratio as a calibrated
signature of correlated residual updates rather than as evidence of
learned underuse of layers.  CKA also makes $\Deff$ insensitive to pure
orthogonal rotations of the residual basis, so the diagnostic measures
accumulated-state similarity, not every possible form of computation.
We restrict empirical claims to decoder-only autoregressive language
models throughout; encoder-only, vision, and multimodal transformers are
left to future work.

\lead{Contributions.}
\begin{enumerate}
  \item \textbf{Effective depth ($\Deff$):} a scalar,
        architecture-agnostic diagnostic of redundancy in the
        accumulated residual stream, built from a CKA-based similarity
        autocorrelation and a weighted full-lag aggregation, both
        validated independently
        (Section~\ref{sec:method},
        Appendix~S13,
        Appendix~S7).
  \item \textbf{A calibrated sub-reference operating regime.}  Across
        sixteen decoder-only language models spanning four families,
        nearly every default measurement shows additional redundancy
        below the orthogonal-update reference $F_L = 2L/(L+1)$.
        Matched references show that this gap is not explained by the
        persistent initial state or measured update-size profile, but
        is largely accounted for by measured update correlations.  The
        same regime survives position-0, token-normalisation, and
        top-PC controls applied symmetrically to both outliers and
        non-outliers; the lone above-reference default outlier joins
        the regime under every control
        (Section~\ref{sec:experiments},
        Figure~\ref{fig:regime_gap_map},
        Figure~\ref{fig:control_variant_gap_map};
        Appendices~S3
        and~S4).
  \item \textbf{Training-dynamics arc.}  Three OLMo-2 sizes (1B, 7B,
        13B) establish a $40$--$41\%$ gap to $F_L$ by their first
        landed post-initialisation checkpoints and remain near it
        through 5T tokens, while
        Pythia-1.4B follows a distinct monotonically-decreasing
        trajectory ($44\%$ at 2B tokens to $26\%$ at 300B tokens).
        The sub-reference regime is therefore family-conditioned and
        early-established rather than solely a function of training
        progress (Section~\ref{sec:experiments},
        Figure~\ref{fig:training_dynamics}).
  \item \textbf{Mechanism and global-vs-local distinction.}  A
        controlled residual-carry intervention on a 12-layer nanoGPT
        shows that mild reductions in residual carry leave $\Deff(h)/L$
        unchanged, while the only reductions large enough to raise it
        substantially also break optimisation
        (Appendix~S17).  $\Deff$ is a
        \emph{global} accumulated-state diagnostic, not a capability
        score or pruning method; Appendix~S19
        reports pruning boundary checks that keep this scope explicit.
\end{enumerate}

\section{Related Work}
\label{sec:related}

\lead{Layer redundancy and pruning.}
\citet{men2024shortgpt} use Block Influence (BI), the cosine
similarity between input and output of each block, to score
prunability; \citet{gromov2024unreasonable} remove up to 50\% of
layers with small perplexity loss; \citet{elhoushi2024layerskip},
\citet{kim2024shortened}, \citet{raposo2024mixture}, and
\citet{fan2019reducing} leverage the same redundancy via early-exit,
structured removal, conditional routing, or stochastic dropout.
These works identify redundancy through ablation; $\Deff$ formalises
it as a single scalar that aggregates the full lag spectrum, of which
BI is the lag-1 term.

\lead{Depth utilization and residual structure.}
\citet{sun2025curse} identify the ``curse of depth'' in Pre-LayerNorm
architectures, where the Jacobian of deep blocks approaches identity.
\citet{razzhigaev2024linear} find Procrustes similarity $> 0.99$
between adjacent layers that drops substantially when the residual
component is removed; \citet{csordas2025depth} measure update norms
in Llama 3.1 and Qwen 3 and find second-half layers contribute far
less than first-half layers; \citet{jastrzebski2018residual}
distinguish representation learning (early) from iterative inference
(later); \citet{lad2024remarkable} argue that decoder-only LMs pass
through distinct inference stages; and \citet{veit2016residual} show
ResNets behave as
ensembles of shallow networks.  Our null baseline complements these
by showing the redundancy is architectural rather than a training
pathology, and $\Deff^{\mathrm{update}}$
(Appendix~S16) extends the per-layer view into a
global scalar.

\lead{Mechanistic perspective and massive activations.}
\citet{elhage2021mathematical} formalise the residual stream as a
shared communication channel;
\citet{geva2021transformer,geva2022transformer} show feed-forward
layers act as key-value memories, and \citet{belrose2023eliciting}
show intermediate-layer predictions improve monotonically.
\citet{yomdin2024jump} use learned linear transformations to expose
early-layer predictions, and \citet{skean2024representation} study
when intermediate-layer representations outperform final-layer
features.
\citet{queipodellano2026attention} show that massive position-0 / BOS
activations and within-layer compression valleys are two
manifestations of the same mechanism, operating within a single layer
and within a single sequence.  $\Deff$ measures redundancy of the
accumulated stream \emph{across depth} and \emph{across samples}; we
confirm in Section~\ref{sec:experiments}
(Figure~\ref{fig:control_variant_gap_map}) that position-0 controls
preserve the sub-reference regime rather than explain it away.

\lead{Representation similarity and intrinsic dimension.}
We use CKA~\citep{kornblith2019similarity} for its orthogonal
invariance; alternatives include
SVCCA~\citep{raghu2017expressive}, PWCCA~\citep{morcos2018insights},
and rotation-sensitive shape metrics
\citep{williams2021generalized}.
\citet{raghu2021vision,nguyen2021do} apply CKA to vision transformers
vs.\ CNNs and to language models.  Our aggregations connect to the
intrinsic-dimension literature~\citep{ansuini2019intrinsic}; the
Bartlett aggregation is standard in econometrics
\citep{newey1987simple} and MCMC diagnostics
\citep{geyer1992practical} and we use it as a scalar summary of the
layer autocorrelation profile, not as an estimator of any specific
mixing coefficient.

\section{Background}
\label{sec:background}

We collect the two ingredients used to define $\Deff$.

\lead{Centered Kernel Alignment.}
For representation matrices $H, H' \in \RR^{n \times d}$ with
centered columns, the linear CKA
similarity~\citep{kornblith2019similarity} is
\begin{equation}
  \CKA(H, H') \;=\; \frac{\|H'^\top H\|_F^2}{\|H^\top H\|_F \cdot \|H'^\top H'\|_F}.
  \label{eq:cka}
\end{equation}
$\CKA \in [0, 1]$ is invariant to isotropic scaling and orthogonal
transformation, and equals $1$ iff $H' = c H Q$ for scalar $c > 0$
and orthogonal $Q$.

\lead{Weighted full-lag aggregation.}
For a stationary scalar sequence with autocorrelation $\rho(k) =
\mathrm{Corr}(X_t, X_{t+k})$, lag weighting appears in
classical long-run-variance and sampling-efficiency
estimators~\citep{newey1987simple,geyer1992practical}:
\begin{equation}
  \Neff \;=\; \frac{N}{1 + 2\sum_{k=1}^{N-1}\!\left(1 - \frac{k}{N}\right)\rho(k)},
  \label{eq:ess}
\end{equation}
We borrow only the finite-lag weighting as a scalar summary
of the layer-similarity profile.  Trained models are not stationary
chains, so the strict ESS interpretation does not apply and $\Deff$ is
not a beta-mixing estimator.  Connections to mixing theory under
additional norm-process assumptions are deferred to
Appendix~S8.

\section{Method}
\label{sec:method}

\lead{Effective Depth ($\Deff$).}
Consider a transformer with $L$ layers; let $H_\ell \in \RR^{n
\times d}$ denote the matrix of hidden states at layer $\ell$ over
$n$ input sequences.  We exclude the embedding layer ($\ell = 0$) and
define the \emph{layer-wise process} $(H_1, \ldots, H_L)$.
Computing $\Deff$ requires a similarity metric between layer
representations and an aggregation that converts the resulting
profile into a scalar.

\lead{Two views of the residual stream.}
\label{par:two_views}
The residual connection $h_\ell = h_{\ell-1} + f_\ell(h_{\ell-1})$
admits two natural objects: the \emph{accumulated representation}
$h_\ell$ and the \emph{layer update} $f_\ell = h_\ell - h_{\ell-1}$.
We deliberately measure $\Deff$ on $h_\ell$ because pruning and
layer-skipping operate on the accumulated stream, and because the
question ``how many distinct information states does the network pass
through?'' concerns $h_\ell$.  Even maximally diverse updates produce
an accumulated-state $\Deff = 2L/(L+1) < 2$ in the idealised
construction with zero update cross-covariance
(Appendix~S2); this is a geometric property
of accumulation, not a statement about what each layer computes.  A
complementary diagnostic on $f_\ell$, $\Deff^{\mathrm{update}}$, is
reported in Appendix~S16.

\begin{definition}[Layer similarity autocorrelation]
\label{def:rho}
The \emph{layer similarity autocorrelation at lag $k$} is
\begin{equation}
  \hat\rho(k) \;=\; \frac{1}{L-k}\sum_{\ell=1}^{L-k} \CKA(H_\ell,\, H_{\ell+k}),
  \label{eq:rho_k}
\end{equation}
with $\CKA$ as in Eq.~\eqref{eq:cka} on column-centered inputs;
$\hat\rho(k) \in [0,1]$.
\end{definition}

CKA is invariant to orthogonal transformation and isotropic scaling;
we verify in Appendix~S13 that unbiased
HSIC-CKA~\citep{song2012feature} and cosine yield nearly identical
$\hat\rho(k)$.  $\hat\rho$ is the analogue of scalar autocorrelation
along the layer dimension; we do not claim it estimates any specific
mixing coefficient.

\begin{definition}[Effective depth]
\label{def:deff}
Given $L$ layer representations and $\hat\rho(k)$,
\begin{equation}
  \Deff \;=\; \frac{L}{1 + 2\sum_{k=1}^{L-1}\!\left(1 - \frac{k}{L}\right)\hat\rho(k)}.
  \label{eq:deff}
\end{equation}
The denominator lies in $[1, L]$, so $\Deff \in [1, L]$ without
clipping.
\end{definition}

The Bartlett taper $(1 - k/L)$ down-weights high-lag terms, so the
sum runs over all $L-1$ lags without bandwidth selection.
Alternative aggregations, e.g.\ participation
ratio~\citep{roy2020effective} and effective
rank~\citep{roy2007effective}, give independent summaries of the same
data and yield qualitatively identical conclusions
(Appendix~S7).  $\Deff$ is a
Bartlett-weighted full-lag scalar summary, not a strict sampling-efficiency estimate
(trained models are not stationary chains; see
Section~\ref{sec:experiments}).

\begin{proposition}[Orthogonal-update reference]
\label{prop:fl}
Let $h_\ell = \sum_{j=1}^\ell f_j$ with per-layer updates $f_j$
satisfying common covariance and zero cross-covariance across
layers.  Then the Bartlett-aggregated effective depth equals
\[
  F_L \;:=\; \Deff \;=\; \frac{2L}{L+1} \;<\; 2
  \qquad \text{for all } L \geq 1.
\]
\end{proposition}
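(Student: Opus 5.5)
The plan is a direct computation in the population (large-$n$) idealisation: work out the CKA between any two accumulated states in closed form, then substitute it into the Bartlett aggregation of Definition~\ref{def:deff}. Let $\Sigma$ be the common covariance of the centred updates $f_j$, and assume zero cross-covariance $\mathrm{Cov}(f_i,f_j)=0$ for $i\neq j$. For layers $a<b$ this gives
\[
\mathrm{Cov}(h_a)=a\Sigma,\qquad \mathrm{Cov}(h_b)=b\Sigma,\qquad \mathrm{Cov}(h_b,h_a)=\sum_{i\le b,\,j\le a}\mathrm{Cov}(f_i,f_j)=a\Sigma .
\]
Column-centred sample Gram matrices $H^\top H/n$ converge to these covariances. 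Hence Eq.~\eqref{eq:cka} becomes
\[
\CKA(H_a,H_b)\to\frac{\|a\Sigma\|_F^2}{\|a\Sigma\|_F\,\|b\Sigma\|_F}=\frac{a}{b}.
\]
Since $\Sigma$ cancels, its shape does not matter, only that it is common across layers. So $\CKA(H_\ell,H_{\ell+k})=\ell/(\ell+k)$.

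Next, I substitute this into Definition~\ref{def:rho}: $\hat\rho(k)=\frac{1}{L-k}\sum_{\ell=1}^{L-k}\frac{\ell}{\ell+k}$. The key simplification is that the Bartlett weight cancels the lag-average normalisation exactly:
\[
\Bigl(1-\tfrac{k}{L}\Bigr)\frac{1}{L-k}=\frac{1}{L}.
\]
The double sum therefore collapses to a sum over all layer pairs,
\[
\sum_{k=1}^{L-1}\Bigl(1-\tfrac{k}{L}\Bigr)\hat\rho(k)=\frac1L\sum_{1\le a<b\le L}\frac{a}{b}.
\]
I then reorganise this sum by the larger index $b$:
\[
\sum_{b=1}^{L}\frac{1}{b}\sum_{a=1}^{b-1}a=\sum_{b=1}^{L}\frac{b-1}{2}=\frac{L(L-1)}{4}.
\]
Hence the denominator of Eq.~\eqref{eq:deff} equals $1+2\cdot\frac{L-1}{4}=\frac{L+1}{2}$. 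This gives $\Deff=2L/(L+1)$, which is strictly less than $2$ for every $L\ge1$ because $2L<2(L+1)$. As sanity checks I would verify the small cases: $L=1$ gives $1$, $L=2$ gives $4/3$, and $L=3$ gives $3/2$ (for $L=3$, $\hat\rho(1)=7/12$ and $\hat\rho(2)=1/3$).

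The main obstacle is the modelling step rather than the algebra. At finite $n$, linear CKA on sample matrices is only approximately $a/b$, because sample cross-covariances between independent updates are $O(n^{-1/2})$ rather than zero. I would state the proposition explicitly at the population level, as the reference construction in Appendix~S2 presumably does, reading "zero cross-covariance" as an exact identity of the second moments entering CKA. Equivalently, it can be read as an exact statement for any sample in which the centred update blocks satisfy $F_i^\top F_j=0$ for $i\ne j$ and $F_i^\top F_i=F_j^\top F_j$. Under that reading every step above is an identity, and the CKA limit needs no concentration argument.
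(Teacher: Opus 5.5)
Your proof is correct and follows the paper's own argument in Appendix~S2: compute the population CKA between accumulated states as $i/j$, form the lag profile $\hat\rho(k)=\frac{1}{L-k}\sum_{i=1}^{L-k} i/(i+k)$, and substitute into the Bartlett formula. The paper leaves that last step as ``direct substitution''; your cancellation $(1-k/L)/(L-k)=1/L$, which collapses the double sum to $\frac{1}{L}\sum_{a<b}a/b=(L-1)/4$, is exactly the algebra it omits.
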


We use $F_L$ as an idealized maximally-diverse-update reference
throughout Section~\ref{sec:experiments}.  It is not a null
distribution or a target value; the measured gap to $F_L$ quantifies
additional accumulated-state redundancy beyond this structural
residual-accumulation reference.  We further calibrate this gap using
matched scalar references that retain measured update-size profiles,
$h_0$ carry, and measured update-similarity profiles
(Appendix~S3).  Proof in
Appendix~S2.

\begin{proposition}[Convergence of CKA-based estimator]
\label{prop:consistency}
Fix $\ell, m \in \{1, \ldots, L\}$ and feature dimension $d$.  If
inputs $x_1, \ldots, x_n$ are i.i.d., the activations satisfy
$\EE[\|h_j(x_i)\|^4] < \infty$, and the population covariance
matrices $\Sigma_{\ell\ell}, \Sigma_{mm}$ are nonzero, then
$\hat\rho_{\mathrm{CKA}}(k)$ is a consistent estimator of its
population counterpart as $n \to \infty$
(proof in Appendix~S5).
\end{proposition}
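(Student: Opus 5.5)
The plan is a continuous-mapping argument: write $\hat\rho_{\mathrm{CKA}}(k)$ as a fixed continuous function of finitely many empirical second-moment matrices, show each matrix converges almost surely to its population value, and conclude by continuity at the population point. Since $\hat\rho(k)$ is a finite average over at most $L$ pairs $(\ell,\ell+k)$, it suffices to prove consistency of a single term $\CKA(H_\ell,H_m)$ for a fixed pair $(\ell,m)$. A finite average of consistent estimators is then consistent for the average of the limits.

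\textbf{Step 1 (moment representation).} Let $\bar h_j=\frac1n\sum_i h_j(x_i)$. With column-centered $H_j$, the empirical cross-covariance is
\[
\hat\Sigma_{jj'}=\tfrac1n H_j^\top H_{j'}=\tfrac1n\sum_i h_j(x_i)h_{j'}(x_i)^\top-\bar h_j\bar h_{j'}^\top .
\]
By Eq.~\eqref{eq:cka},
\[
\CKA(H_\ell,H_m)=\frac{\|\hat\Sigma_{m\ell}\|_F^2}{\|\hat\Sigma_{\ell\ell}\|_F\,\|\hat\Sigma_{mm}\|_F},
\]
because the factors $1/n^2$ cancel between numerator and denominator.

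\textbf{Step 2 (laws of large numbers).} The summands $h_\ell(x_i)h_m(x_i)^\top$ are i.i.d. in $i$. By Cauchy--Schwarz,
\[
\EE\|h_\ell h_m^\top\|_F \le \bigl(\EE\|h_\ell\|^2\,\EE\|h_m\|^2\bigr)^{1/2} < \infty,
\]
which follows from the assumed fourth moments. The first moments $\EE\|h_j\|$ are likewise finite. The strong LLN therefore gives $\hat\Sigma_{jj'}\to\Sigma_{jj'}$ almost surely, entrywise, for $j,j'\in\{\ell,m\}$; the dimension $d$ is fixed, so entrywise convergence is convergence of the matrix. Only second moments are needed for consistency. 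The fourth-moment hypothesis would additionally yield a CLT / $\sqrt n$ rate via the delta method, and I would remark on this but not rely on it.

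\textbf{Step 3 (continuity; the main obstacle).} The map $g(A,B,C)=\|B\|_F^2/(\|A\|_F\|C\|_F)$ is continuous wherever $A\neq0$ and $C\neq0$. The hypothesis $\Sigma_{\ell\ell},\Sigma_{mm}\neq0$ places the population point inside this domain. Because the domain is open, the empirical matrices lie in it for all sufficiently large $n$ almost surely, so the finite-$n$ event where the estimator is undefined does not matter asymptotically. The continuous mapping theorem then gives
\[
\CKA(H_\ell,H_m)\to \frac{\|\Sigma_{m\ell}\|_F^2}{\|\Sigma_{\ell\ell}\|_F\|\Sigma_{mm}\|_F}
\]
almost surely, and hence in probability.

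\textbf{Step 4 (aggregation).} Summing over $\ell=1,\dots,L-k$ and dividing by $L-k$ gives consistency of $\hat\rho_{\mathrm{CKA}}(k)$ for its population counterpart. This step requires the nondegeneracy hypothesis to hold for every layer involved, which I would read the statement as assuming for all pairs. Since $\Deff$ is a continuous function of $(\hat\rho(1),\dots,\hat\rho(L-1))$ whose denominator is at least $1$, consistency of $\Deff$ follows by the same continuous-mapping argument.
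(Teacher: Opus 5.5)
Your proposal is correct and follows essentially the same route as the paper's proof. Both apply the strong law of large numbers to the centered cross-covariance matrices, pass through the continuous CKA functional using nondegeneracy of $\Sigma_{\ell\ell}$ and $\Sigma_{mm}$, and then average the finitely many lag pairs. Your extra remarks sharpen the paper's write-up without changing the argument: only second moments are needed for the law of large numbers, nondegeneracy must hold for every layer that enters the average, and the estimator is eventually well defined.
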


\section{Experiments}
\label{sec:experiments}

\lead{Models and setup.}
We evaluate sixteen publicly available decoder-only language models
across four architecture families:
\textbf{Pythia} \citep{biderman2023pythia}
(70M/160M/410M/1B/1.4B; $L \in \{6,12,16,24,24\}$, base),
\textbf{OLMo-2} \citep{groeneveld2024olmo}
(1B/7B/13B; $L \in \{16,32,40\}$, base),
\textbf{Qwen3.5} (0.8B/2B/4B/9B/27B; $L \in \{24,24,32,32,64\}$,
instruct), and individual base models Llama-3.1-8B
\citep{dubey2024llama}, Mistral-7B-v0.3 \citep{jiang2023mistral},
Gemma-3-12b-pt \citep{team2024gemma}
($L = 32, 32, 48$).  All are autoregressive LMs; encoder-only and
vision transformers are out of scope.  We extract mean-pooled hidden
states from $N{=}10{,}000$ FineWeb-Edu \citep{penedo2024fineweb}
passages per model ($N>d$ for every model; finite-sample behaviour is
calibrated in Appendices~S2 and~S13);
CKA is computed in float64 with Frobenius normalisation and $\Deff$
uses the full autocorrelation ($K = L{-}1$, no truncation).

\lead{Reference value.}
We compare each model against the \emph{orthogonal-update reference}
$F_L = 2L/(L+1) < 2$, the closed-form $\Deff$ when per-layer updates
have zero pairwise CKA and are accumulated into the residual stream
(Appendix~S2).  This is an idealized
maximally-diverse-update reference, not a probabilistic null or a
target.  The signed gap
$\mathrm{gap} = (F_L - \Deff)/F_L$ is positive when a model's
accumulated states are more similar than this residual-accumulation
reference would produce, and negative when they are less so.
The evidence below follows the intended use of $\Deff$: establish the
regime, calibrate the gap with matched references, test whether it
survives pooling and norm controls, trace when it appears during
training, probe the residual-carry mechanism, and separate global
redundancy from local pruning; we also include a conservative external
check against benchmark scaling.


\begin{figure}[t]
  \centering
  \includegraphics[width=0.98\linewidth]{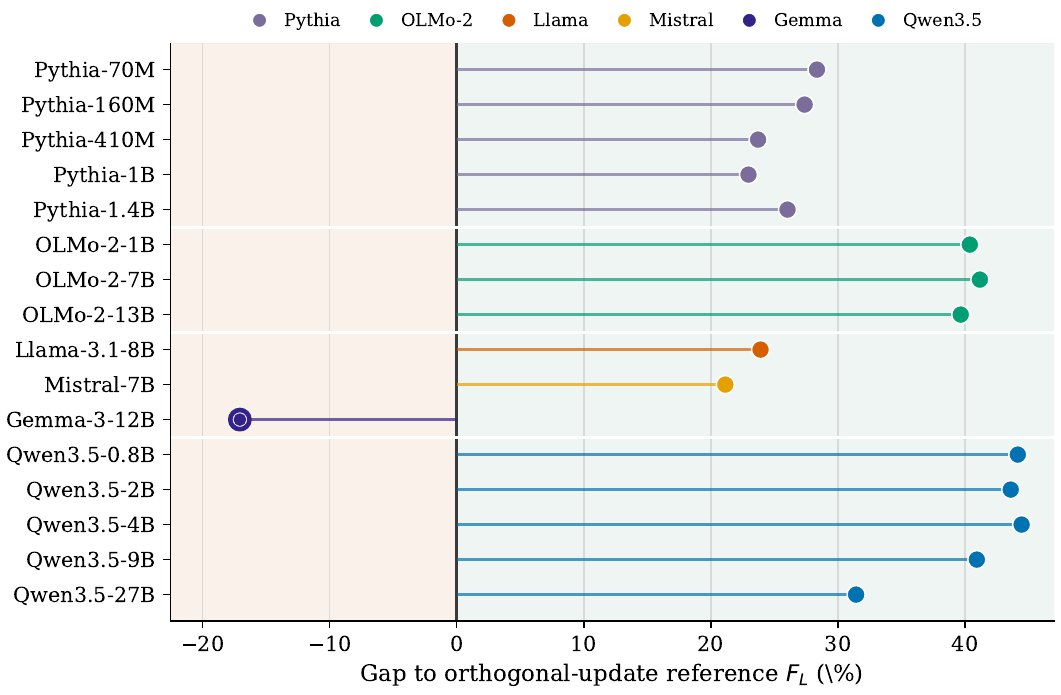}
  \caption{Gap to the orthogonal-update reference
           $F_L = 2L/(L+1)$ for the sixteen-model sweep
           ($N{=}10{,}000$, FineWeb-Edu, $K{=}L{-}1$).  Positive
           values indicate $\Deff < F_L$, i.e.\ accumulated states show
           additional redundancy beyond the orthogonal-update reference; negative
           values indicate $\Deff > F_L$.  Fifteen of sixteen default
           measurements are sub-reference, with gaps clustering by
           model family.  Gemma-3-12b is the lone above-reference
           default outlier; position-0 and norm controls in
           Figure~\ref{fig:control_variant_gap_map} move it into the
           sub-reference regime.  Full numeric values are reported in
           Table~\ref{tab:deff_models}.}
  \label{fig:regime_gap_map}
\end{figure}

\lead{A sub-reference operating regime.}
Fifteen of the sixteen models in Figure~\ref{fig:regime_gap_map} show
positive gap below $F_L$ of $20$--$44\%$, with gaps clustering by family (Qwen3.5
$32$--$44\%$, OLMo-2 $40$--$41\%$, Pythia $23$--$28\%$,
Llama-3.1-8B and Mistral-7B $21$--$24\%$).  Within each family the gap
is roughly constant in scale; Gemma-3-12b at $-17.6\%$ is the lone
above-reference default outlier.  This pattern suggests that
family-level design and training choices, rather than parameter count
alone, shape the residual-stream similarity profile.  Absolute $\Deff$ for 7B+ models
stays in $[1.15, 2.30]$ despite nominal depths $32$--$64$; the small
$\Deff/L$ is a diagnostic signature, not direct evidence of underuse,
since residual accumulation alone produces $\Deff/L = O(1/L)$ at every
depth (controls below).  Passage-bootstrap $95\%$ CIs collapse to the
reported point estimate at three-decimal precision
(Appendix~S18).

\lead{Matched references calibrate the gap.}
The universal $F_L$ reference deliberately removes all model-specific
update geometry, so we next ask which model-specific quantities explain
the gap.  Appendix~S3 reports three scalar
matched references.  Matching the measured update-size profile, with
or without retaining the persistent initial state $h_0$, does not
remove the sub-$F_L$ gap: all sixteen models remain below these
references.  The sign pattern is uniform: gaps to the update-size and
$h_0$-matched references are positive for every row ($+17$--$77\%$),
whereas gaps to the update-correlation-matched reference are negative
for every row ($-3$--$-89\%$).  Thus the gap to $F_L$ is not mainly an
$h_0$ or update-norm artifact; it is largely explained by correlated
updates.  The remaining difference has the opposite sign: real
accumulated states are more diverse than a scalar model that preserves
only update sizes and average update similarities.

\lead{Synthetic constructions verify the diagnostic.}
On controlled constructions evaluated with the same linear-CKA and
weighted aggregation pipeline (Appendix~S2), $\Deff/L =
0.258$ for independent states and $0.996$ for orthogonal states, so
the diagnostic rises when states are genuinely decorrelated.  Orthogonal
per-layer updates accumulated into states yield the closed-form
$\Deff = F_L = 2L/(L+1)<2$, so even maximally distinct updates produce
$\Deff/L=O(1/L)$ in the accumulated stream.  Low $\Deff(h)$ is
therefore not by itself evidence that specific layers are functionally
redundant, and CKA-invisible rotations remain outside the diagnostic's
scope.


\lead{Position-0 and norm controls: a consistent sub-reference regime.}
Recent work shows that decoder-only LMs develop massive position-0 /
BOS activations and within-layer compression
valleys~\citep{queipodellano2026attention}, which can in principle
distort any pooled diagnostic.  We re-evaluate $\Deff$ on a six-model
representative set under four position/norm \emph{control variants},
applying the same controls to default outliers and non-outliers:
pooling from position 1 only (``first non-BOS''),
$\ell_2$-normalising each token before pooling (``token-normed''),
and projecting out the top-1 or top-3 cross-sample principal
components per layer before CKA.  Figure~\ref{fig:control_variant_gap_map}
reports the resulting gap to $F_L$ for six representative models; the
exact values and full sixteen-model control table are given in
Appendix~S4.

The pattern is consistent but not invariant.  Token-normalisation and
top-PC removal leave OLMo-2 and Qwen3.5 sub-reference, while
single-position pooling can move Qwen3.5 substantially; this is why we
treat the controls as sensitivity analyses rather than as an alternate
canonical measurement.  Llama-3.1-8B is sub-reference under default
($+24\%$) but its tokenizer auto-prepends BOS and its position-0 norm
peaks at $310\times$ at layer~2; once BOS is excluded its gap rises to
$+41\%$, the same regime as OLMo-2 and Qwen3.5.  The single
above-reference model in the default measurement, Gemma-3-12b
($11\times$ position-0 norm at layer~7), shifts under \emph{every}
control variant from $-17\%$ to gaps in $[+27\%, +42\%]$; token
normalisation alone is sufficient.  Position-0 norm spikes therefore
modulate the measured gap, sometimes strongly, but they do not explain
the regime away: every model in the panel is sub-reference after
first-non-BOS, token-normalisation, or top-PC controls.  We retain
default pooling as the canonical measurement and use the controls to
separate the family-level regime from massive-activation geometry.
Across the full sixteen-model sweep, all models are sub-reference after
token-normalisation and after top-1-PC removal; first-non-BOS pooling
is also sub-reference except for the shallow Pythia-70M boundary case
(Appendix~S4).
Additional pretrained Gemma-3 sizes show that this behaviour is family-level:
Gemma-3-4B is near-reference and Gemma-3-27B above-reference by
default, but controls move both sub-reference; Gemma-scale
interventions show that post-normalisation removal collapses the
stream while QK-normalisation removal amplifies the default-pooling
anomaly.  Initialisation-only and short-pretraining probes do not
reproduce the anomaly, so we treat these norm effects as modulators of
trained checkpoints rather than as a complete architectural explanation
(Appendix~S4).

\begin{figure}[t]
  \centering
  \includegraphics[width=\linewidth]{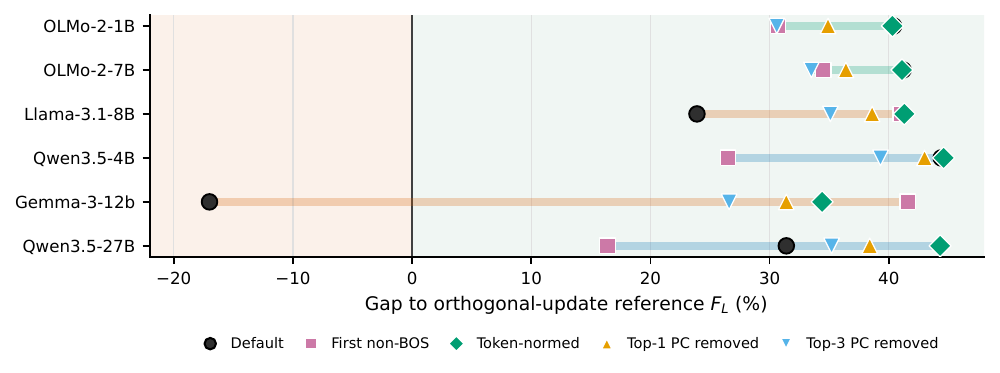}
  \caption{Position-0 and norm control variants for a six-model
           representative set.  Points show gap to $F_L$ in percent;
           the vertical line marks the reference.  ``First non-BOS''
           uses position 1 only, ``token-normed'' normalises each token
           before pooling, and top-PC controls remove the leading
           cross-sample components before CKA.  The controls can move
           the measured gap substantially, especially for Gemma-3-12B
           and BOS-sensitive Llama/Qwen cases, but all controlled
           variants in this panel are sub-reference.  Exact values and
           the full sixteen-model table are in
           Appendix~S4.}
  \label{fig:control_variant_gap_map}
\end{figure}


\lead{Training dynamics.}
To test whether the sub-reference regime is a late-training endpoint
or an early property of the residual stream, we evaluate $\Deff$
across intermediate OLMo-2 checkpoints and a Pythia-1.4B trajectory
(Figure~\ref{fig:training_dynamics}).  Across three OLMo-2 sizes the
gap to $F_L$ is established early and stays near it: a $\sim 40\%$
gap by the first landed post-initialisation OLMo-2-1B checkpoint
(21B tokens), $40$--$41\%$ over 412B--3.5T tokens
for OLMo-2-7B, and $40$--$43\%$ from initialisation through 5T tokens
for OLMo-2-13B.  Pythia-1.4B follows a different trajectory,
decreasing monotonically from $\sim 44\%$ at 2B tokens to $26\%$ at
300B tokens.  The regime is therefore not solely a function of
training progress but a stable family-level operating point in
OLMo-2, with older architectures following a distinct path; we do not
attribute the contrast to any single architectural choice without the
controlled probe in Appendix~S17.

\begin{figure}[t]
  \centering
  \includegraphics[width=0.95\linewidth]{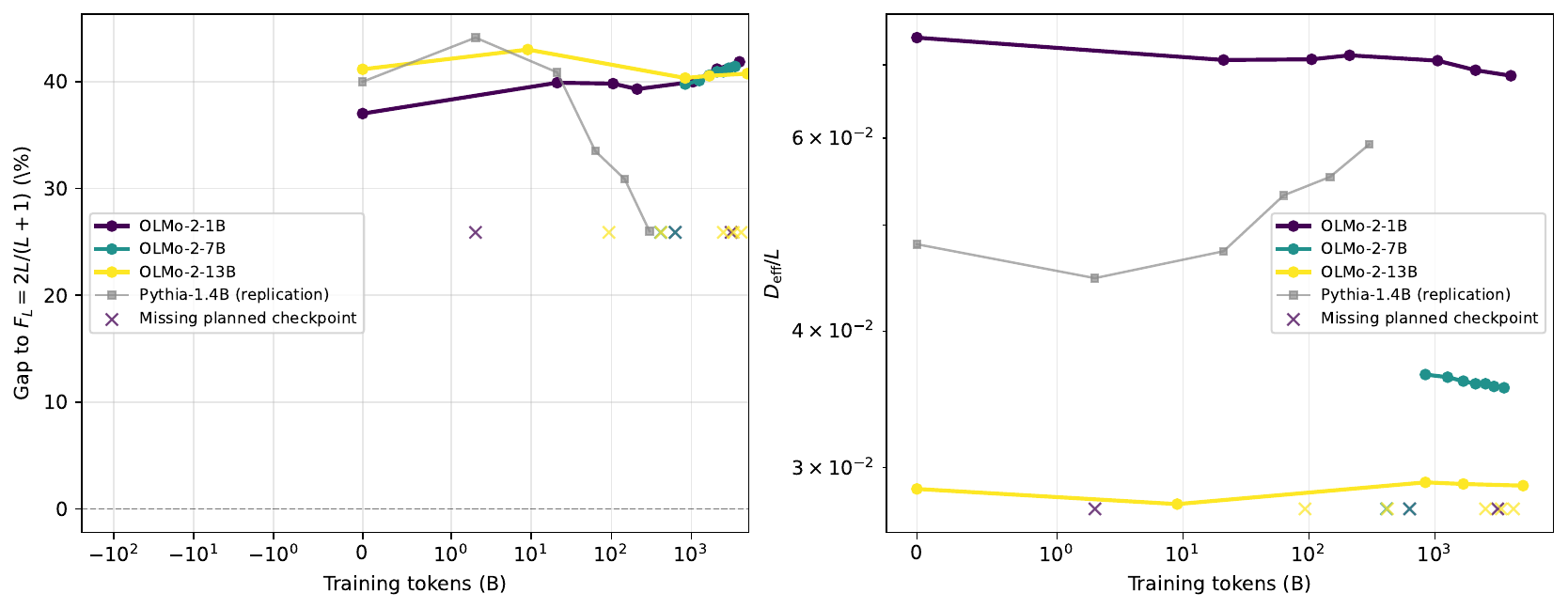}
  \caption{Effective-depth training dynamics.  Left: gap to $F_L =
           2L/(L+1)$ as a percentage, vs.\ training tokens.  Right:
           $\Deff/L$ vs.\ training tokens.  Three OLMo-2 sizes
           (1B, 7B, 13B) converge to a stable $40$--$41\%$ gap by
           the earliest landed post-initialisation checkpoints and
           persist through 5T tokens; Pythia-1.4B shows a distinct
           monotonic decrease.  The figure plots all landed
           checkpoints (25 of 35 planned); missing OLMo-2 points
           correspond to unavailable or failed revisions and are not
           interpolated.}
  \label{fig:training_dynamics}
\end{figure}


\lead{Null baseline: a strong residual-geometric component.}
We compute $\Deff$ for both trained and \textbf{random-weight}
versions of the same architectures
(Table~\ref{tab:null_multi}, Appendix~S10).
Trained and random-weight $\Deff/L$ remain in the same low range
(maximum absolute difference $0.018$), indicating a strong
residual-geometric component.  A five-seed seed-stability check on
seven architectures shows that random-weight $\Deff/L$ is seed-stable
with standard deviation at most $2 \times 10^{-4}$,
defining a near-deterministic architectural reference; trained values
lie outside this random-seed range in all seven cases, with the
direction architecture-dependent (trained values are below random for
Qwen3.5 and OLMo-2-1B, but above random for larger OLMo-2 and
Llama-3.1-8B; full numbers in
Appendix~S10).  The architecture and initialisation
largely determine the order of magnitude of accumulated-state
effective depth; training perturbs $\Deff/L$ modestly without changing this scale, so
$\Deff$ is primarily sensitive to residual-stream geometry rather than
to the full content of learned computations.

\begin{table}[t]
  \centering
  \caption{Multi-family null baseline ($N{=}10{,}000$, FineWeb-Edu).
           Trained and random-weight $\Deff/L$ remain in the same low
           range across model families for completed pairs (maximum
           absolute difference 0.018).  For OLMo-2-1B/7B/13B,
           Qwen3.5-4B/9B/27B, and Llama-3.1-8B, the random column
           reports the mean of five independent seeds; for the other
           rows it is a single architecture-default initialisation.}
  \label{tab:null_multi}
  \begin{tabular}{lcccc}
    \toprule
    Model & $L$ & Trained & Random & $|\Delta|$ \\
    \midrule
    OLMo-2-1B      & 16 & 0.070 & 0.074 & 0.004 \\
    Qwen3.5-0.8B   & 24 & 0.045 & 0.060 & 0.016 \\
    Qwen3.5-2B     & 24 & 0.045 & 0.063 & 0.018 \\
    Qwen3.5-4B     & 32 & 0.034 & 0.047 & 0.013 \\
    Llama-3.1-8B   & 32 & 0.046 & 0.035 & 0.011 \\
    OLMo-2-7B      & 32 & 0.037 & 0.036 & 0.001 \\
    Qwen3.5-9B     & 32 & 0.036 & 0.045 & 0.009 \\
    OLMo-2-13B     & 40 & 0.029 & 0.029 & 0.001 \\
    Qwen3.5-27B    & 64 & 0.021 & 0.022 & 0.001 \\
    \bottomrule
  \end{tabular}
\end{table}


\lead{Capability-relevant scaling within Qwen3.5.}
To test whether $\Deff$ is more than an internal measurement, we compare
it with external benchmarks.  Within Qwen3.5, lower $\Deff/L$ co-moves
with higher MMLU ($r=-0.896$, $p=0.040$) and ARC-Challenge
($r=-0.913$, $p=0.030$); HellaSwag is borderline.  Thus $\Deff$ tracks
a capability-relevant scaling trajectory in this family, although the
same axis also changes scale and depth.  We therefore do not treat
$\Deff$ as a capability score, and cross-family panels remain mixed
(Appendix~S20).


\lead{Residual-carry mechanism.}
The natural mechanism is the residual skip itself: when each layer
adds an update of small relative norm, $h_\ell \approx h_{\ell-1}$
regardless of what the layer computes.  Two complementary supports
agree.  First, the per-layer-update diagnostic
$\Deff^{\mathrm{update}}/L$ stays in the same low band as $\Deff/L$
across all evaluated models (Appendix~S16,
Table~S13), so the updates themselves are
correlated.  Second, a controlled $\gamma$-sweep on a 12-layer
nanoGPT (Appendix~S17) shows that mild
reductions in residual carry leave $\Deff(h)/L$ unchanged, while the
only reductions that substantially raise it also break optimisation.
These results support residual carry as a causal contributor and
suggest that escaping the low-$\Deff$ regime is tied to stable
optimisation in this controlled setting.


\lead{$\Deff$ vs.\ BI: global summary vs.\ local adjacency score.}
$\Deff$ aggregates full-lag accumulated-state similarity via the
Bartlett taper; the closest single-lag local diagnostic is the
Block-Influence (BI) score~\citep{men2024shortgpt}, exactly $1$ minus
the lag-1 cosine similarity between adjacent layers.  The two views
agree numerically on many deep, lag-1-dominated models, which is
expected; Appendix~S15 shows that the full-lag
aggregation matters in shallow boundary cases such as Pythia-70M, where
higher-lag decay would be missed by lag-1 alone.  Thus $\Deff$ is not a
replacement for BI: BI scores individual layers, while $\Deff$ is a
calibrated full-profile model summary with the $F_L$ reference.
We therefore do not present $\Deff$ as a pruning method.  For
completeness, Appendix~S19 reports boundary
checks showing that local $\Delta\Deff$ is a poor layer-pruning ranker
relative to BI, and that an exploratory model-level pruning-tolerance
correlation is not robust enough to support a main claim.

\lead{Robustness and within-family.}
Alternative similarity metrics, aggregations, lag-truncation choices,
and tapers all preserve the qualitative ordering
(Appendices~S13,~S7,~S6,~S9).
Appendix~S11 gives a path-length companion: geodesic
efficiency correlates only moderately with $\Deff/L$ ($r=0.65$), so it
is related to but not redundant with the CKA-based summary.  Within
Qwen3.5, absolute $\Deff$ changes only fractionally with scale
($1.08$--$1.15$ over the 24/32-layer models, $1.34$ at 64 layers), while
Llama-3.1-8B and Mistral-7B sit closer to $F_L$ than Qwen3.5 or OLMo-2
at comparable depth; cross-family comparisons should therefore use
gap-to-reference rather than $\Deff/L$ alone.

\section{Discussion}
\label{sec:discussion}

\lead{What $\Deff$ measures, and what the regime tells us.}
$\Deff$ is a \emph{global} summary of redundancy in the accumulated
residual stream: it rises when states are genuinely decorrelated
($0.996$ on orthogonal, $0.258$ on independent) and falls toward
$F_L$ when accumulated states approximate maximally diverse residual
accumulation.  Because $F_L=2L/(L+1)$ is itself structural, the empirical
claim is the additional measured gap and its explanation.  Matched
references show that the gap survives update-size and $h_0$ matching but
is accounted for by measured update similarity, exposing a
correlated-update regime rather than a hidden count of unused layers.
Position-0, token-normalisation, top-PC controls, and training dynamics
then show that this regime is neither a pooling artifact nor solely a
function of training progress; in the evaluated models it behaves like a
family-conditioned operating point.  The practical value is therefore
calibration: $\Deff$ tells us when low normalized depth is expected from
residual accumulation, when a family sits unusually far below or above
that reference, and when a control reveals a position-0 or norm-driven
distortion rather than a change in layer computation.

\lead{Mechanism: residual carry, interpreted once.}
A natural mechanism is the residual skip itself: when each layer adds a
small relative-norm update, $h_\ell \approx h_{\ell-1}$ regardless of the
layer computation.  The update-level diagnostic remains in the same low
band as $\Deff/L$ (Appendix~S16), and the 12-layer
nanoGPT $\gamma$-sweep shows that only large residual-carry reductions
escape the low-$\Deff$ regime, while also breaking optimisation
(Appendix~S17).  Residual carry is not the only
mechanism: pre-norm, norm-growth, and Procrustes effects
\citep{sun2025curse,razzhigaev2024linear} also compound with residual
accumulation, and the OLMo-2-vs-Pythia contrast still needs controlled
architectural ablation.  This is why we frame the result as a geometry
of the accumulated stream rather than as a claim that deeper layers do
not compute: updates can be meaningful while the states received by
downstream layers remain highly correlated.

\lead{Global vs.\ local: $\Deff$, BI, and prunable fraction.}
$\Deff/L \approx 0.03$--$0.05$ for 7B+ models, yet pruning studies
report 25--50\% removable
\citep{men2024shortgpt,gromov2024unreasonable}.  The apparent gap is
real: $\Deff$ counts representational diversity, while pruning measures
task-specific functional tolerance.  We therefore treat pruning only as
a boundary check (Appendix~S19): $\Deff$
summarises the model, while BI scores local layer adjacency.  This
division is useful in practice.  A new checkpoint, architecture, or
compressed model can be compared globally by its gap-to-reference and
control variants; local layer removal should still use local influence
or ablation scores.  Expecting one scalar to answer both questions is
precisely the interpretation error that the calibrated reference is
meant to prevent.

\lead{Limitations and future work.}
$\Deff$ is a corpus-based measurement, not an architectural prediction;
CKA is blind to pure residual-stream rotations, so Procrustes or shape
metrics~\citep{williams2021generalized} are natural complements; and
trained models are not stationary chains, so $\Deff$ is not a literal
ESS or beta-mixing estimator (Appendix~S8).
Empirical claims are restricted to decoder-only LMs on FineWeb-Edu.
The next step is a controlled grid varying residual update scale,
initialisation, normalisation, and architectures that weaken the unit
identity path.  Such experiments would turn the family-conditioned
patterns observed here into causal architectural statements, and would
clarify whether designs such as mixture-of-depths, residual scaling, or
parallel depth can deliberately move models away from the standard
residual-accumulation regime.

\section{Conclusion}
\label{sec:conclusion}

Effective depth ($\Deff$) is a global diagnostic of accumulated
residual-state redundancy with an explicit structural baseline:
orthogonal updates already give $F_L = 2L/(L+1)<2$.  Across sixteen
decoder-only LMs, the calibrated gap is mostly sub-reference, is tied
by matched references to correlated updates, and survives position-0,
token-normalisation, and top-PC controls.  Together with the training
dynamics, random-weight baselines, and residual-carry probe, this makes
$\Deff$ a model-level diagnostic of residual-stream geometry rather
than a claim that only a few layers matter.  Its main use is to give
researchers a calibrated map: where a model family sits relative to the
residual-accumulation reference, which anomalies are caused by
position-0 or norm geometry, and when global state redundancy should be
separated from local pruning decisions.  In that map, $\Deff$ is the
global coordinate and BI-like scores are the local companion.

\acks{We thank the anonymous ACML 2026 reviewers and area chairs for
constructive feedback that improved the paper. Computational resources were
provided by the Institute of Science and Technology Austria (ISTA) and the
Technion Israel Institute of Technology.}

\setlength{\bibsep}{1pt plus 0.3ex}
\bibliography{references}

\appendix

\section{Full Effective-Depth Measurements}
\label{app:full_deff_table}

Table~\ref{tab:deff_models} reports the numeric values underlying
Figure~\ref{fig:regime_gap_map}.

\begin{table}[t]
  \centering
  \caption{Effective depth ($N{=}10{,}000$, FineWeb-Edu, $K{=}L{-}1$).
           Absolute $\Deff$ remains $O(1)$ across scales.  All models
           except Gemma-3-12b sit \emph{below} the orthogonal-update
           reference $F_L = 2L/(L+1)$ (positive gap), with the
           gap clustering by family; Gemma-3-12b is the lone
           above-reference default outlier.}
  \label{tab:deff_models}
  \resizebox{\linewidth}{!}{%
  \begin{tabular}{lcccccc}
    \toprule
    Model & $L$ & $d$ & $\Deff$ & $\Deff / L$ & gap (\%) & $\hat\rho(1)$ \\
    \midrule
    Pythia-70M     & 6  & 512  & 1.23 & 0.205 & $+28.2$ & 0.862 \\
    Pythia-160M    & 12 & 768  & 1.34 & 0.112 & $+27.2$ & 0.856 \\
    Pythia-410M    & 24 & 1024 & 1.46 & 0.061 & $+23.8$ & 0.940 \\
    Pythia-1B      & 16 & 2048 & 1.46 & 0.091 & $+22.7$ & 0.914 \\
    Pythia-1.4B    & 24 & 2048 & 1.42 & 0.059 & $+26.2$ & 0.954 \\
    \midrule
    OLMo-2-1B     & 16 & 2048 & 1.12 & 0.070 & $+40.5$ & 0.972 \\
    OLMo-2-7B     & 32 & 4096 & 1.15 & 0.036 & $+40.6$ & 0.988 \\
    OLMo-2-13B    & 40 & 5120 & 1.18 & 0.029 & $+39.7$ & 0.988 \\
    \midrule
    Llama-3.1-8B   & 32 & 4096 & 1.47 & 0.046 & $+24.1$ & 0.964 \\
    Mistral-7B     & 32 & 4096 & 1.54 & 0.048 & $+20.8$ & 0.973 \\
    Gemma-3-12b    & 48 & 3840 & 2.30 & 0.048 & $-17.6$ & 0.910 \\
    \midrule
    Qwen3.5-0.8B   & 24 & 1024 & 1.08 & 0.045 & $+43.7$ & 0.986 \\
    Qwen3.5-2B     & 24 & 2048 & 1.08 & 0.045 & $+43.7$ & 0.984 \\
    Qwen3.5-4B     & 32 & 2560 & 1.09 & 0.034 & $+43.9$ & 0.985 \\
    Qwen3.5-9B     & 32 & 4096 & 1.15 & 0.036 & $+40.6$ & 0.974 \\
    Qwen3.5-27B    & 64 & 5120 & 1.34 & 0.021 & $+31.8$ & 0.973 \\
    \bottomrule
  \end{tabular}%
  }
\end{table}

\setcounter{section}{0}
\setcounter{table}{0}
\setcounter{figure}{0}
\setcounter{equation}{0}
\renewcommand{\thesection}{S\arabic{section}}
\renewcommand{\thetable}{S\arabic{table}}
\renewcommand{\thefigure}{S\arabic{figure}}
\renewcommand{\theequation}{S\arabic{equation}}
\renewcommand{\theHsection}{S\arabic{section}}
\renewcommand{\theHtable}{S\arabic{table}}
\renewcommand{\theHfigure}{S\arabic{figure}}
\renewcommand{\theHequation}{S\arabic{equation}}
\section*{Appendix Roadmap}

The appendix is organised as an evidence map for the main-text claims.
The full per-model numeric table underlying the main regime figure is
kept in the main paper (Appendix~A); this
supplement collects the remaining material.
Appendices~\ref{app:deff_properties}--\ref{app:synthetic_controls}
contain the mathematical and synthetic foundations
of $\Deff$: basic properties, the orthogonal-update reference $F_L$, and
controlled constructions that separate accumulated states from layer
updates.
Appendix~\ref{app:matched_references} gives the matched-reference
calibration that separates the universal $F_L$ gap from update-size,
$h_0$, and measured update-correlation effects.
Appendix~\ref{app:control_variants_all16} expands the position-0,
token-normalisation, and top-PC controls from the main representative
panel to all sixteen models, adds the Gemma-3 size extension, and
reports Gemma-style architectural probes at initialisation and under a
short pretraining budget.
Appendix~\ref{app:cka_convergence} gives the CKA convergence proof.
Appendices~\ref{app:k_sensitivity}--\ref{app:tapers} then collect
robustness checks for the aggregation itself, including lag truncation,
alternative aggregations, the norm-process mixing analogy, and
alternative tapers.  Appendix~\ref{app:exp_details} documents
extraction and CKA details and the trained-vs-random null baseline
figure, while Appendices~\ref{app:geodesic}--\ref{app:instruct_base}
report auxiliary diagnostics and representation-similarity robustness
checks.
Appendices~\ref{app:shortgpt_shallow}--\ref{app:residual_carry} support
the interpretation of $\Deff$ as a global residual-stream diagnostic:
they compare against ShortGPT-style shallow behaviour, report the
per-update diagnostic, and give the controlled residual-carry
intervention.  The final appendices provide statistical stability and
boundary checks: bootstrap confidence intervals
(Appendix~\ref{app:bootstrap_ci}), pruning boundary checks
(Appendix~\ref{app:pruning_tolerance}), and the external
capability-scaling check in Appendix~\ref{app:capability_correlation}.

\section{Properties of $\Deff$}
\label{app:deff_properties}

We state and prove the basic properties of the effective depth diagnostic.
Throughout, $\hat\rho(k)$ denotes the CKA-based layer similarity
autocorrelation (Definition~1 of the main paper), which satisfies
$\hat\rho(k) \in [0,1]$ since $\CKA \in [0,1]$.

\begin{theorem}[Properties of $\Deff$]
\label{thm:deff_properties}
Let $\hat\rho(k) \in [0,1]$ for all $k = 1, \ldots, L-1$.  Then:
\begin{enumerate}[label=(\roman*)]
  \item \textbf{Bounds:} $1 \leq \Deff \leq L$.
  \item \textbf{Independence limit:} If $\hat\rho(k) = 0$ for all $k \geq 1$,
        then $\Deff = L$.
  \item \textbf{Maximum redundancy:} If $\hat\rho(k) = 1$ for all $k$, then
        $\Deff = 1$ for every $L \geq 1$.
  \item \textbf{Algebraic structure:} $\Deff = L / \hat S$ where
        $\hat S = 1 + 2\sum_{k=1}^{L-1}(1-k/L)\hat\rho(k)$.  This has the same
        algebraic form as the Bartlett spectral density estimator at frequency
        zero, but we do not claim that $\hat\rho$ has the Toeplitz
        positive-semidefinite structure of a true autocorrelation sequence.
\end{enumerate}
\end{theorem}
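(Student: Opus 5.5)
The whole argument rests on bounding the denominator $\hat S = 1 + 2\sum_{k=1}^{L-1}(1-k/L)\hat\rho(k)$. All four parts then follow by direct substitution into $\Deff = L/\hat S$. The key fact is the closed-form weight sum
\[
  \sum_{k=1}^{L-1}\Bigl(1-\tfrac{k}{L}\Bigr) \;=\; (L-1) - \frac{1}{L}\cdot\frac{(L-1)L}{2} \;=\; \frac{L-1}{2}.
\]
I would compute this first; it is the only genuine calculation.

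For (i), note that each weight $1-k/L$ lies in $(0,1)$ for $1\le k\le L-1$, and each $\hat\rho(k)\in[0,1]$ because $\CKA\in[0,1]$ (Definition~\ref{def:rho}). Every term $(1-k/L)\hat\rho(k)$ is therefore nonnegative, so $\hat S\ge 1$. Each term is also at most $1-k/L$, so $\hat S \le 1 + 2\cdot\frac{L-1}{2} = L$. Since $\hat S\in[1,L]$ is positive, $\Deff = L/\hat S$ is well defined and lies in $[1,L]$. For $L=1$ the sum is empty, $\hat S=1$, and $\Deff=1$, so no separate case is needed.

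Parts (ii) and (iii) are the two extremes of the same bound. If all $\hat\rho(k)=0$, then $\hat S=1$ and $\Deff=L$. If all $\hat\rho(k)=1$, the weight identity gives $\hat S=L$ and $\Deff=1$, uniformly in $L\ge1$.

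Part (iv) is definitional: $\hat S$ is exactly the denominator of Eq.~\eqref{eq:deff}. I would point out that it matches the Bartlett-kernel estimate $\sum_{|k|<L}(1-|k|/L)\hat\rho(k)$ of $2\pi f(0)/\gamma(0)$ once $\hat\rho(0)=1$ and the symmetry $\hat\rho(-k)=\hat\rho(k)$ are used. The claim is purely one of algebraic form, so nothing further needs proof. The disclaimer about Toeplitz positive semidefiniteness needs no argument either: the CKA-averaged $\hat\rho$ is not built from a stationary covariance sequence, and the bounds in (i) never used positive semidefiniteness.

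There is no real obstacle here. The only point needing care is checking that the weights are nonnegative and that the lower bound on $\hat S$ relies solely on $\hat\rho(k)\ge0$. That nonnegativity holds for linear CKA as defined in Eq.~\eqref{eq:cka}, since its numerator is a squared Frobenius norm.
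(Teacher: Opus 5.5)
Your proposal is correct and follows essentially the same route as the paper's proof. Both bound $\hat S$ between $1$ (from $\hat\rho(k)\ge 0$) and $L$ (from $\hat\rho(k)\le 1$ together with the weight sum $\sum_{k=1}^{L-1}(1-k/L)=(L-1)/2$), then read off (ii) and (iii) as the two extremes and treat (iv) as definitional.
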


\begin{proof}
\textbf{(i)} The denominator is
$\hat S = 1 + 2\sum_{k=1}^{L-1}(1-k/L)\hat\rho(k)$.
Since $\hat\rho(k) \geq 0$ and $(1-k/L) > 0$ for $k < L$, every term in the
sum is non-negative, so $\hat S \geq 1$ and therefore $\Deff \leq L$.
For the upper bound on $\hat S$: since $\hat\rho(k) \leq 1$,
\[
  \hat S \;\leq\; 1 + 2\sum_{k=1}^{L-1}\!\left(1-\frac{k}{L}\right)
  \;=\; 1 + 2\!\left((L-1) - \frac{(L-1)L/2}{L}\right)
  \;=\; L.
\]
Hence $\Deff = L / \hat S \geq 1$.

\textbf{(ii)} If $\hat\rho(k) = 0$ for $k \geq 1$, then $\hat S = 1$ and
$\Deff = L$.

\textbf{(iii)} If $\hat\rho(k) = 1$ for all $k$, then
\[
  \hat S = 1 + 2\sum_{k=1}^{L-1}\!\left(1-\frac{k}{L}\right)
  = 1 + 2(L-1) - 2 \cdot \frac{(L-1)L/2}{L}
  = 1 + 2(L-1) - (L-1) = L.
\]
So $\Deff = L/L = 1$ exactly.

\textbf{(iv)} This is immediate from the definition.
\end{proof}

\section{Controlled Synthetic Constructions and Empirical Validation}
\label{app:synthetic_controls}

To clarify what $\Deff$ does and does not measure, we evaluate controlled
synthetic constructions using the same linear-CKA and Bartlett aggregation
pipeline as the main experiments.  Unless otherwise noted, these controls use
$N = 10{,}000$, $L = 32$, and $d = 1024$.  The goal is qualitative validation
of the diagnostic: can it rise on genuinely decorrelated states, and can
accumulated states remain highly redundant even when the underlying updates
are diverse?  \Cref{tab:synthetic_controls} summarizes the three
constructions, and Figure~\ref{fig:synthetic_summary} contrasts them with
real models; the per-construction analyses below expand on each.

\begin{table}[t]
  \centering
  \caption{Controlled synthetic constructions.  Independent and orthogonal
           states show that the metric rises when states are genuinely
           decorrelated; orthogonal-update sweeps show that $\Deff(h)$ can
           remain low even when the updates themselves are diverse.}
  \label{tab:synthetic_controls}
  \begin{tabular}{lccc}
    \toprule
    Construction & $\Deff(h)/L$ & $\Deff(f)/L$ & $\hat\rho_h(1)$ \\
    \midrule
    Independent states & 0.258 & -- & 0.093 \\
    Orthogonal states & 0.996 & -- & 0.0001 \\
    Orthogonal updates (scale sweep) & 0.031--0.056 & 0.258 & 0.911--0.9999 \\
    \bottomrule
  \end{tabular}
\end{table}

\begin{figure}[t]
  \centering
  \includegraphics[width=0.85\linewidth]{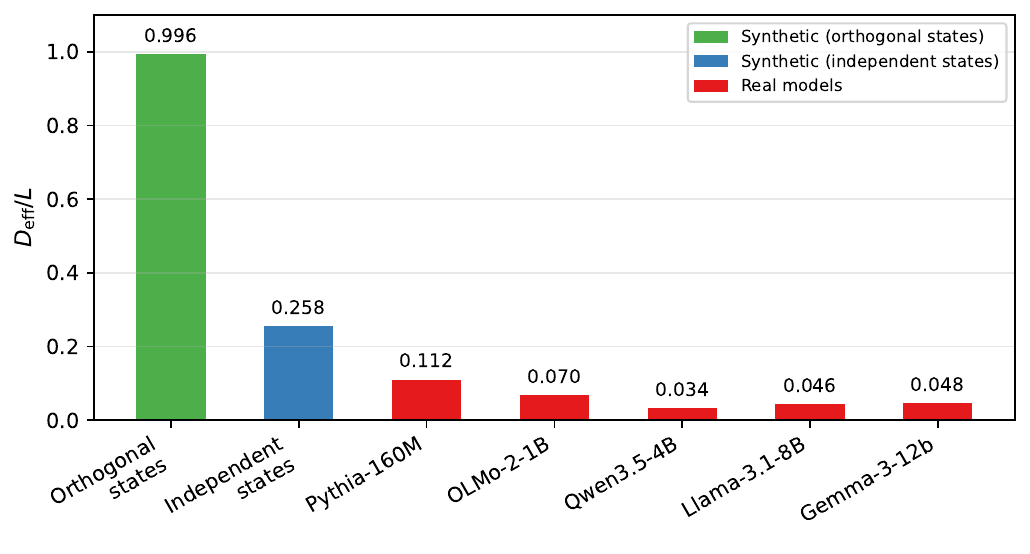}
  \caption{$\Deff/L$ for synthetic constructions (green, blue) and real
           models (red).  Orthogonal states recover $\Deff/L = 0.996$;
           independent states give 0.258; real models fall well below both,
           confirming that the low values reflect genuine redundancy in
           the accumulated residual stream.}
  \label{fig:synthetic_summary}
\end{figure}

\lead{Independent-state baseline.}
For i.i.d.\ Gaussian states $H_1, \ldots, H_L$, the empirical diagnostic gives
$\Deff/L = 0.258$ and $\hat\rho(1) = 0.093$.  This is well above the real-model
range, showing that the estimator does not collapse on unrelated states.
It is below the idealized independence limit of 1 because finite-sample linear
CKA remains positive when $d/N$ is non-negligible, even for unrelated random
matrices.

\lead{Orthogonal-state construction.}
When the states themselves are explicitly orthogonal across layers, the metric
returns $\Deff/L = 0.996$ with $\hat\rho(1) = 0.0001$.  This rules out the
failure mode in which the diagnostic is intrinsically unable to approach
full depth.

\lead{Orthogonal-update construction.}
The key control separates state redundancy from update diversity.  We construct
mutually orthogonal per-layer updates $f_\ell$ and define accumulated states by
$h_\ell = \sum_{j \leq \ell} f_j$.  In this construction the updates remain
diverse by design, yet the accumulated states can still be highly similar.
\Cref{tab:ortho_update_sweep} shows the resulting sweep over update scale.

The population version of this construction has a simple closed form.  If the
updates have common covariance and zero cross-covariance across layers, then
the update CKA is zero for distinct layers and, for $i<j$,
\[
  \CKA(h_i,h_j) =
  \frac{\|\sum_{r\leq i}\Sigma_{f_r f_r}\|_F^2}
       {\|\sum_{r\leq i}\Sigma_{f_r f_r}\|_F
        \|\sum_{r\leq j}\Sigma_{f_r f_r}\|_F}
  = \frac{i}{j}.
\]
Thus the lag-$k$ profile is
$\hat\rho(k)=\frac{1}{L-k}\sum_{i=1}^{L-k} i/(i+k)$, and direct
substitution into Eq.~(5) of the main paper gives
$\Deff=2L/(L+1)<2$.  This is why the normalized ratio $\Deff/L$ falls with
nominal depth even under maximally diverse updates
(Figure~\ref{fig:synthetic_depth_sweep}).

\begin{figure}[t]
  \centering
  \includegraphics[width=0.85\linewidth]{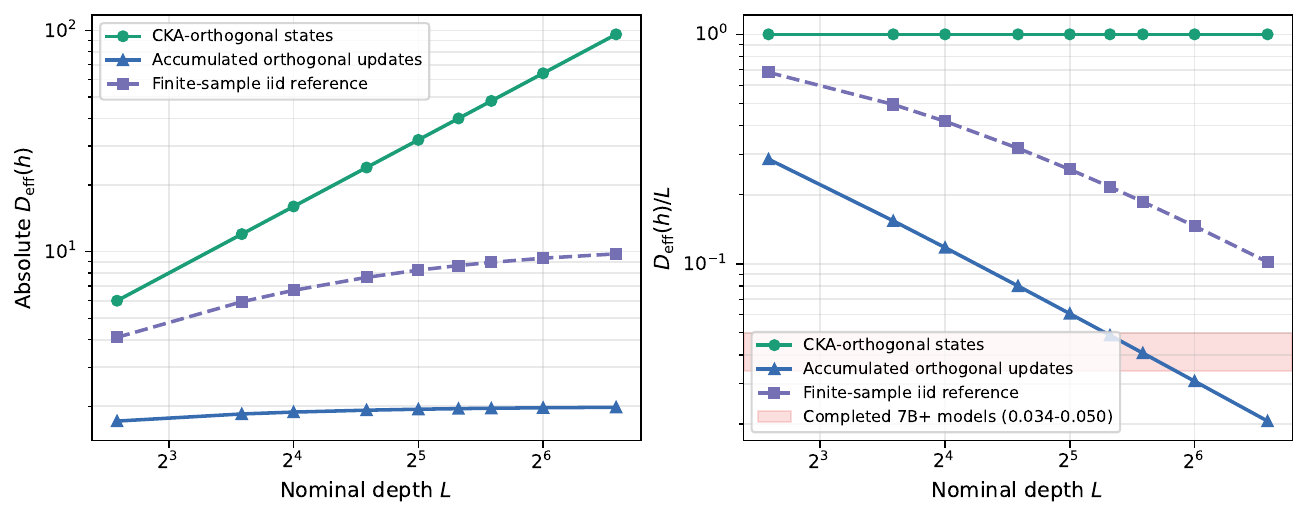}
  \caption{Depth sweep for synthetic residual constructions.
           Orthogonal states recover $\Deff=L$, whereas accumulated
           orthogonal updates obey $\Deff=2L/(L+1)<2$ and therefore
           have $\Deff/L=O(1/L)$.  The finite-sample iid reference
           matches the positive CKA bias observed in the sampled iid
           control; it is shown only as an estimator calibration.}
  \label{fig:synthetic_depth_sweep}
\end{figure}

\begin{table}[t]
  \centering
  \caption{Orthogonal-update construction ($N{=}10{,}000$, $L{=}32$,
           $d{=}1024$).  Diverse updates can coexist with very low
           accumulated-state depth.}
  \label{tab:ortho_update_sweep}
  \begin{tabular}{lccc}
    \toprule
    $\|f\|/\|h\|$ scale & $\Deff(h)/L$ & $\Deff(f)/L$ & $\hat\rho_h(1)$ \\
    \midrule
    0.01 & 0.031 & 0.258 & 0.9999 \\
    0.05 & 0.032 & 0.258 & 0.998 \\
    0.10 & 0.034 & 0.258 & 0.992 \\
    0.30 & 0.044 & 0.258 & 0.964 \\
    0.50 & 0.049 & 0.258 & 0.945 \\
    1.00 & 0.054 & 0.258 & 0.924 \\
    2.00 & 0.055 & 0.258 & 0.915 \\
    5.00 & 0.056 & 0.258 & 0.911 \\
    \bottomrule
  \end{tabular}
\end{table}

\begin{figure}[t]
  \centering
  \includegraphics[width=0.85\linewidth]{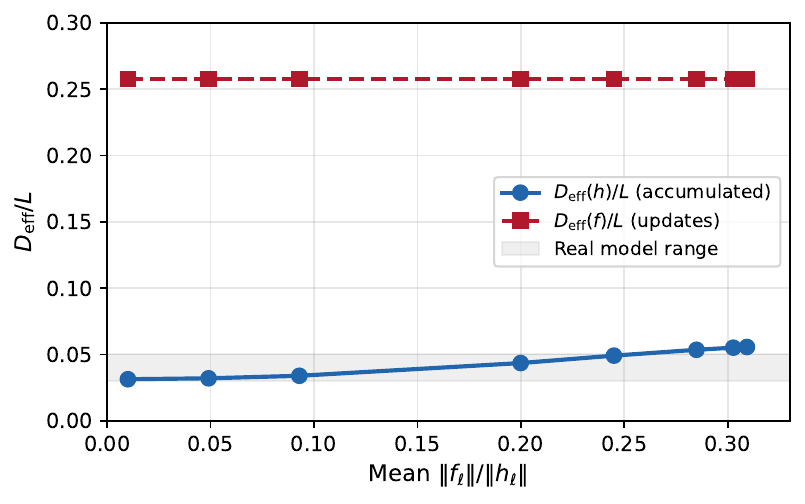}
  \caption{Orthogonal-update sweep ($N{=}10{,}000$, $L{=}32$, $d{=}1024$).
           $\Deff(f)/L$ (updates, red) remains constant at 0.258 regardless
           of scale, while $\Deff(h)/L$ (accumulated, blue) stays in the
           real-model range (gray band).  Diverse updates do not imply
           diverse accumulated states.}
  \label{fig:ortho_sweep}
\end{figure}

\lead{Implications for interpreting real models.}
These constructions establish three points.  First, $\Deff$ is not broken:
it rises substantially on independent states and approaches $L$ on
orthogonal states (Figure~\ref{fig:synthetic_summary}).  Second, low
$\Deff(h)$ does not imply that layers compute identical updates: even
perfectly orthogonal updates can yield very low accumulated-state depth
(Figure~\ref{fig:ortho_sweep}).  Third, the real-model results combine
both phenomena.  Residual accumulation keeps the sequence of states seen
by downstream layers highly similar, and the real-model updates are also
correlated, as shown by $\Deff^{\mathrm{update}}$ in
Appendix~\ref{app:update_deff}.

\section{Matched Reference Baselines}
\label{app:matched_references}

The universal $F_L=2L/(L+1)$ reference is intentionally simple: it
assumes equal-norm, mutually orthogonal updates and no persistent
initial component.  Table~\ref{tab:matched_references} asks which
model-specific quantities change that reference.  $F_{\mathrm{norm}}$
retains the measured update covariance-norm profile while setting
update cross-covariances to zero.  $F_{h_0+\mathrm{norm}}$ additionally
retains the persistent initial state.  $F_{h_0+\mathrm{corr}}$ keeps
the measured update-size profile, $h_0$, and the measured update
similarity lag profile in a scalar accumulated-state reference.

Two points matter for the interpretation.  First, matching update
sizes and $h_0$ does not remove the sub-$F_L$ finding: all sixteen
models remain below $F_{\mathrm{norm}}$ and
$F_{h_0+\mathrm{norm}}$.  Second, once the measured update-similarity
profile is retained, the sign flips for every model:
$F_{h_0+\mathrm{corr}}$ predicts lower $\Deff$ than observed.  Thus the
gap to the universal $F_L$ reference is largely explained by correlated
updates, while the observed accumulated states retain more diversity
than a scalar reference that preserves only update sizes and average
update similarities.  This is why we treat $F_L$ as a first structural
yardstick, not as a definitive null.

\begin{table}[t]
  \centering
  \caption{Matched reference baselines for $\Deff$.  $F_L$ is the
           universal equal-update reference; $F_{\mathrm{norm}}$
           matches measured update covariance norms;
           $F_{h_0+\mathrm{norm}}$ additionally retains the persistent
           initial state; $F_{h_0+\mathrm{corr}}$ also retains the
           measured update-similarity lag profile.  Gaps are relative
           to each reference, in percent.}
  \label{tab:matched_references}
  \resizebox{\linewidth}{!}{%
  \begin{tabular}{lrrrrrr}
    \toprule
    Model & $L$ & $\Deff$ & gap to $F_L$ & gap to $F_{\mathrm{norm}}$ & gap to $F_{h_0+\mathrm{norm}}$ & gap to $F_{h_0+\mathrm{corr}}$ \\
    \midrule
    Pythia-70M & 6 & 1.23 & +28.3 & +59.0 & +59.0 & -14.7 \\
    Pythia-160M & 12 & 1.34 & +27.4 & +54.5 & +54.5 & -24.6 \\
    Pythia-410M & 24 & 1.46 & +23.7 & +46.6 & +46.6 & -39.0 \\
    Pythia-1B & 16 & 1.45 & +23.0 & +34.9 & +34.9 & -38.6 \\
    Pythia-1.4B & 24 & 1.42 & +26.0 & +42.1 & +42.1 & -36.9 \\
    OLMo-2-1B & 16 & 1.12 & +40.4 & +64.1 & +61.7 & -5.8 \\
    OLMo-2-7B & 32 & 1.14 & +41.2 & +69.4 & +68.1 & -7.1 \\
    OLMo-2-13B & 40 & 1.18 & +39.7 & +63.4 & +62.3 & -9.0 \\
    Llama-3.1-8B & 32 & 1.48 & +23.9 & +40.5 & +40.5 & -40.1 \\
    Mistral-7B & 32 & 1.53 & +21.1 & +16.7 & +16.7 & -43.8 \\
    Gemma-3-12b & 48 & 2.29 & -17.1 & +48.6 & +48.6 & -89.1 \\
    Qwen3.5-0.8B & 24 & 1.07 & +44.1 & +64.4 & +63.4 & -3.0 \\
    Qwen3.5-2B & 24 & 1.08 & +43.6 & +71.6 & +71.3 & -4.2 \\
    Qwen3.5-4B & 32 & 1.08 & +44.4 & +73.6 & +73.4 & -2.9 \\
    Qwen3.5-9B & 32 & 1.15 & +40.9 & +76.8 & +76.8 & -9.0 \\
    Qwen3.5-27B & 64 & 1.35 & +31.4 & +68.7 & +68.6 & -27.7 \\
    \bottomrule
  \end{tabular}%
  }
\end{table}

\section{Full Position-0 and Norm Controls}
\label{app:control_variants_all16}

Table~\ref{tab:control_variants_all16} expands the main-text
position-0 and norm-control analysis to all sixteen models.  The
controls are symmetric: they are applied to default outliers and
non-outliers alike.  The strongest invariant result is that all
sixteen models are below the orthogonal-update reference after
token-normalisation and after top-1-PC removal.  First-non-BOS pooling
is also sub-reference for fifteen of sixteen models; the exception is
Pythia-70M, a six-layer boundary case.

\begin{table}[t]
  \centering
  \caption{Position-0 and norm control variants of $\Deff$ for the
           full model set ($N{=}10{,}000$, FineWeb-Edu).  ``Default''
           columns reproduce the canonical pooling used in
           Table~2 of the main paper.  ``First non-BOS'' replaces
           the pooled mean with position 1 only; ``token-normed''
           divides each token's hidden state by its $\ell_2$ norm
           before mean pooling; ``top-1/3 PC'' projects out dominant
           cross-sample directions from each layer's pooled matrix.
           The only above-reference model in the canonical setup,
           Gemma-3-12b, becomes strongly sub-reference under every
           control variant.}
  \label{tab:control_variants_all16}
  \resizebox{\linewidth}{!}{%
  \begin{tabular}{lcrrrrrrr}
    \toprule
    & & \multicolumn{2}{c}{Default} & \multicolumn{4}{c}{Control variants (gap to $F_L$, \%)} & \\
    \cmidrule(lr){3-4} \cmidrule(lr){5-8}
    Model & $L$ & $\Deff/L$ & gap\% & first non-BOS & token-normed & top-1 PC & top-3 PC & max pos-0 norm ratio \\
    \midrule
    Pythia-70M & 6 & 0.2047 & +28.3 & -6.5 & +30.1 & +20.0 & +11.4 & 10.0$\times$ at $L_{3}$ \\
    Pythia-160M & 12 & 0.1117 & +27.4 & +22.1 & +33.7 & +27.7 & +23.3 & 18.4$\times$ at $L_{4}$ \\
    Pythia-410M & 24 & 0.0610 & +23.7 & +8.7 & +38.4 & +34.3 & +27.9 & 42.1$\times$ at $L_{9}$ \\
    Pythia-1B & 16 & 0.0906 & +23.0 & +13.7 & +40.0 & +35.3 & +29.7 & 35.3$\times$ at $L_{6}$ \\
    Pythia-1.4B & 24 & 0.0592 & +26.0 & +14.1 & +41.8 & +37.2 & +30.9 & 24.6$\times$ at $L_{5}$ \\
    OLMo-2-1B & 16 & 0.0702 & +40.4 & +30.7 & +40.3 & +34.9 & +30.6 & 2.7$\times$ at $L_{9}$ \\
    OLMo-2-7B & 32 & 0.0357 & +41.2 & +34.5 & +41.1 & +36.4 & +33.5 & 1.5$\times$ at $L_{1}$ \\
    OLMo-2-13B & 40 & 0.0294 & +39.7 & +34.7 & +39.8 & +36.0 & +29.5 & 1.5$\times$ at $L_{1}$ \\
    Llama-3.1-8B & 32 & 0.0461 & +23.9 & +41.0 & +41.3 & +38.6 & +35.1 & 309.7$\times$ at $L_{2}$ \\
    Mistral-7B & 32 & 0.0478 & +21.1 & +33.4 & +42.0 & +34.8 & +33.0 & 202.7$\times$ at $L_{2}$ \\
    Gemma-3-12b & 48 & 0.0478 & -17.0 & +41.6 & +34.4 & +31.4 & +26.6 & 11.0$\times$ at $L_{7}$ \\
    Qwen3.5-0.8B & 24 & 0.0447 & +44.1 & +37.0 & +43.9 & +41.8 & +34.1 & 5.7$\times$ at $L_{7}$ \\
    Qwen3.5-2B & 24 & 0.0451 & +43.6 & +34.6 & +43.2 & +41.6 & +37.3 & 4.5$\times$ at $L_{15}$ \\
    Qwen3.5-4B & 32 & 0.0337 & +44.4 & +26.5 & +44.6 & +43.0 & +39.3 & 3.9$\times$ at $L_{7}$ \\
    Qwen3.5-9B & 32 & 0.0358 & +40.9 & +16.4 & +44.3 & +39.2 & +35.5 & 3.4$\times$ at $L_{7}$ \\
    Qwen3.5-27B & 64 & 0.0211 & +31.4 & +16.4 & +44.3 & +38.4 & +35.2 & 3.5$\times$ at $L_{22}$ \\
    \bottomrule
  \end{tabular}%
  }
\end{table}

Table~\ref{tab:gemma_extra_controls} extends the same controls to two
additional Gemma-3 sizes.  The default-pooling anomaly is not isolated
to Gemma-3-12b: Gemma-3-27B is also above-reference by default, while
Gemma-3-4B sits close to the reference.  In both cases, first-non-BOS,
token-normalised, and top-PC-removed variants are sub-reference.

\begin{table}[t]
  \centering
  \caption{Additional Gemma-3 control variants.  Values are gaps to
           $F_L$ in percent.  The default-pooling anomaly persists
           across larger Gemma-3 sizes, but all controls move both
           extra models into the sub-reference regime.}
  \label{tab:gemma_extra_controls}
  \begin{tabular}{lrrrrr}
    \toprule
    Model & Default & first non-BOS & token-normed & top-1 PC & top-3 PC \\
    \midrule
    Gemma-3-4B  & $+2.2$ & $+35.0$ & $+30.8$ & $+23.4$ & $+25.4$ \\
    Gemma-3-27B & $-8.3$ & $+25.5$ & $+7.4$  & $+3.0$  & $+28.0$ \\
    \bottomrule
  \end{tabular}
\end{table}

To test whether two Gemma-style block features directly account for
this behaviour, we also run Gemma-scale \emph{forward-pass}
interventions on the trained Gemma-3-4B, 12B, and 27B checkpoints.
The intervention is not retraining: post-normalisation replaces the
post-attention and post-MLP RMSNorms by identities, while no-QK
replaces the query/key RMSNorms by identities.  Table~\ref{tab:gemma_arch_ablation}
shows that post-normalisation is not the source of the above-reference
default measurement: removing it drives every Gemma size into a highly
sub-reference regime, with gaps near $+47$--$+49\%$ across controls.
QK normalisation instead modulates the position/default-pooling
channel.  Removing QK normalisation makes the default anomaly much
larger at all three sizes, and for Gemma-3-27B it even makes the
first-non-BOS and token-normalised variants above-reference.  Removing
both features is dominated by the post-normalisation ablation and again
collapses the stream.  Thus the Gemma behaviour is not explained by
post-normalisation alone; QK normalisation appears to suppress, rather
than cause, the position-0/default-pooling anomaly.

\begin{table}[t]
  \centering
  \caption{Gemma-scale forward-pass interventions.  Values are gaps to
           $F_L$ in percent under the same $N{=}10{,}000$ FineWeb-Edu
           extraction as Table~\ref{tab:gemma_extra_controls}.  ``No
           post'' replaces the post-attention and post-MLP RMSNorms by
           identities; ``no QK'' replaces the query/key RMSNorms by
           identities.}
  \label{tab:gemma_arch_ablation}
  \small
  \setlength{\tabcolsep}{3.5pt}
  \begin{tabular}{llrrrrr}
    \toprule
    Model & Intervention & Default & first non-BOS & token-normed & top-1 PC & top-3 PC \\
    \midrule
    Gemma-3-4B  & none          & $+2.2$  & $+35.0$ & $+30.8$ & $+23.4$ & $+25.4$ \\
                & no post       & $+48.1$ & $+48.5$ & $+48.3$ & $+40.8$ & $+47.5$ \\
                & no QK         & $-60.2$ & $+31.3$ & $+6.3$  & $-8.4$  & $+8.6$  \\
                & no post/no QK & $+48.2$ & $+48.5$ & $+48.4$ & $+47.7$ & $+47.6$ \\
    \midrule
    Gemma-3-12B & none          & $-17.0$ & $+35.3$ & $+34.4$ & $+31.4$ & $+26.6$ \\
                & no post       & $+47.6$ & $+48.9$ & $+48.3$ & $+47.2$ & $+46.0$ \\
                & no QK         & $-46.9$ & $+36.3$ & $+1.6$  & $+5.2$  & $+8.8$  \\
                & no post/no QK & $+46.1$ & $+48.9$ & $+46.6$ & $+45.7$ & $+40.3$ \\
    \midrule
    Gemma-3-27B & none          & $-8.3$  & $+25.5$ & $+7.4$  & $+3.0$  & $+28.0$ \\
                & no post       & $+47.5$ & $+49.1$ & $+47.1$ & $+46.8$ & $+42.9$ \\
                & no QK         & $-65.7$ & $-27.3$ & $-74.7$ & $-78.1$ & $+8.8$  \\
                & no post/no QK & $+48.1$ & $+49.1$ & $+48.1$ & $+47.3$ & $+47.3$ \\
    \bottomrule
  \end{tabular}
\end{table}

The preceding interventions act on trained checkpoints.  They therefore
show that Gemma's post-branch and QK normalisers modulate the measured
profile, but they do not by themselves show whether the anomaly is an
architectural property present before training.  We next run two
deliberately limited probes.  First, we instantiate Gemma-3-4B from its
configuration, apply the same norm interventions, and measure $\Deff$
at initialisation only.  Second, we train Gemma-3-270M variants from
scratch on a short FineWeb-Edu token stream.  These runs are not meant
to reproduce Gemma's full pretraining, instruction tuning, or
post-training pipeline.  They are controlled stress tests of whether
the norm switches are sufficient to induce the anomaly under simplified
conditions.

\begin{table}[t]
  \centering
  \caption{Gemma-3-4B initialisation-only norm probes.  Values are
           gaps to $F_L$ in percent, averaged over three random seeds
           with standard deviations.  All jobs use \texttt{max\_steps=0};
           no training is performed.  The pretrained Gemma-style
           default anomaly is absent at random initialisation.}
  \label{tab:gemma_init_probe}
  \small
  \setlength{\tabcolsep}{3.5pt}
  \resizebox{\linewidth}{!}{%
  \begin{tabular}{lrrrrrr}
    \toprule
    Variant & Default & first pos-1 & token-normed & top-1 PC & top-3 PC & max pos-0 ratio \\
    \midrule
    none          & $+43.7{\pm}0.2$ & $+43.9{\pm}0.1$ & $+43.7{\pm}0.2$ & $+42.8{\pm}0.1$ & $+42.2{\pm}0.1$ & $1.00{\pm}0.00$ \\
    no post       & $+47.1{\pm}0.1$ & $+43.6{\pm}0.1$ & $+47.1{\pm}0.1$ & $+46.0{\pm}0.1$ & $+44.7{\pm}0.2$ & $1.20{\pm}0.00$ \\
    no QK         & $+43.7{\pm}0.2$ & $+43.9{\pm}0.0$ & $+43.7{\pm}0.2$ & $+42.8{\pm}0.1$ & $+42.2{\pm}0.1$ & $1.00{\pm}0.00$ \\
    no post/no QK & $+47.1{\pm}0.1$ & $+43.6{\pm}0.1$ & $+47.1{\pm}0.1$ & $+46.0{\pm}0.1$ & $+44.7{\pm}0.2$ & $1.19{\pm}0.00$ \\
    \bottomrule
  \end{tabular}%
  }
\end{table}

Table~\ref{tab:gemma_init_probe} shows that the large-checkpoint
anomaly is not present simply because the Gemma-3-4B block contains
post-branch normalisation and QK normalisation.  All initialisation
variants are strongly sub-reference.  Removing post-normalisation
raises the gap further, while removing QK normalisation has essentially
no effect at initialisation.  This contrasts with the trained-checkpoint
intervention in Table~\ref{tab:gemma_arch_ablation}, where removing QK
normalisation amplifies the default-pooling anomaly.

\begin{table}[t]
  \centering
  \caption{Gemma-3-270M short-pretraining norm probes.  Values are final
           checkpoint gaps to $F_L$ in percent, averaged over three
           seeds with standard deviations.  These runs use a short
           FineWeb-Edu pretraining budget and are not intended to match
           the full Gemma training pipeline.  None of the variants
           reproduces the pretrained Gemma anomaly.}
  \label{tab:gemma_short_pretrain_probe}
  \small
  \setlength{\tabcolsep}{3.5pt}
  \resizebox{\linewidth}{!}{%
  \begin{tabular}{lrrrrrrr}
    \toprule
    Variant & Default & first pos-1 & token-normed & top-1 PC & top-3 PC & max pos-0 ratio & val loss \\
    \midrule
    none          & $+41.3{\pm}0.6$ & $+33.0{\pm}0.5$ & $+41.0{\pm}0.6$ & $+35.7{\pm}2.7$ & $+32.6{\pm}2.1$ & $1.48{\pm}0.04$ & $5.06{\pm}0.01$ \\
    no post       & $+40.7{\pm}0.3$ & $+22.8{\pm}1.7$ & $+40.6{\pm}0.4$ & $+36.8{\pm}1.7$ & $+33.9{\pm}3.0$ & $5.50{\pm}1.17$ & $5.08{\pm}0.01$ \\
    no QK         & $+41.7{\pm}0.3$ & $+31.3{\pm}2.3$ & $+41.6{\pm}0.3$ & $+36.2{\pm}2.9$ & $+32.9{\pm}2.7$ & $1.79{\pm}0.18$ & $5.07{\pm}0.01$ \\
    no post/no QK & $+40.7{\pm}0.3$ & $+26.7{\pm}0.9$ & $+40.8{\pm}0.3$ & $+36.6{\pm}2.0$ & $+34.3{\pm}2.8$ & $5.06{\pm}1.24$ & $5.09{\pm}0.01$ \\
    \bottomrule
  \end{tabular}%
  }
\end{table}

The short-pretraining result should be read as a boundary, not as a
negative proof about Gemma.  A few hundred million FineWeb-Edu tokens
at 270M scale are far from the full training and post-training history
of the released Gemma checkpoints.  Within this simplified setting,
however, the same norm switches are not sufficient to produce the
above-reference default behaviour: all variants remain near a
$+41\%$ sub-reference gap.  Post-normalisation removal does create a
position-0 norm spike, but the default, token-normalised, and top-PC
controlled gaps remain sub-reference.  Together with
Table~\ref{tab:gemma_init_probe}, this suggests that the pretrained
Gemma anomaly is not a pure initialisation-level architectural identity.
The safer interpretation is that Gemma's architecture modulates the
profile of trained checkpoints, while the anomaly itself likely depends
on scale, long training history, or later checkpoint-specific dynamics.

\section{Proof of CKA Convergence (Proposition~4 of the main paper)}
\label{app:cka_convergence}

\begin{proof}
Fix layers $\ell, m \in \{1, \ldots, L\}$ and let $x_1, \ldots, x_n$ be
i.i.d.\ draws from the data distribution.  Denote the $i$-th row of the
centered representation matrix at layer $\ell$ by
$\tilde h_\ell^{(i)} = h_\ell^{(i)} - \bar h_\ell$, where
$\bar h_\ell = n^{-1}\sum_{j=1}^n h_\ell^{(j)}$.

The empirical linear CKA is
\[
  \widehat{\CKA}_n(\ell, m) \;=\;
  \frac{\|\tilde H_m^\top \tilde H_\ell\|_F^2}
       {\|\tilde H_\ell^\top \tilde H_\ell\|_F \cdot
        \|\tilde H_m^\top \tilde H_m\|_F}.
\]

\textbf{Step 1: Convergence of Gram matrices.}
Each entry of $\tilde H_\ell^\top \tilde H_m / n$ is a sample covariance.
By the strong law of large numbers (the assumption
$\EE[\|h_j\|^4] < \infty$ for all layers $j$ ensures
$\mathrm{Var}(h_{\ell,j} h_{m,k}) < \infty$),
\[
  \left[\frac{\tilde H_\ell^\top \tilde H_m}{n}\right]_{jk}
  \;\xrightarrow{a.s.}\; \mathrm{Cov}(h_{\ell,j},\, h_{m,k})
  \;=:\; [\Sigma_{\ell m}]_{jk}.
\]
The sample mean $\bar h_\ell \to \mu_\ell$ a.s., so sample centering and
population centering agree asymptotically: $\tilde H_\ell^\top \tilde H_m / n
\to \Sigma_{\ell m}$ entrywise.

\textbf{Step 2: Continuous mapping.}
Define the population CKA functional:
\[
  \CKA^*(\ell, m) \;=\;
  \frac{\|\Sigma_{\ell m}\|_F^2}{\|\Sigma_{\ell\ell}\|_F \cdot \|\Sigma_{mm}\|_F}.
\]
The map $(A, B, C) \mapsto \|A\|_F^2 / (\|B\|_F \cdot \|C\|_F)$ is continuous
on the domain where $\|B\|_F, \|C\|_F > 0$.  By the non-degeneracy assumption,
$\|\Sigma_{\ell\ell}\|_F > 0$ and $\|\Sigma_{mm}\|_F > 0$.  By the continuous
mapping theorem,
$\widehat{\CKA}_n(\ell, m) \xrightarrow{p} \CKA^*(\ell, m)$.

\textbf{Step 3: Lag-averaged consistency.}
For fixed $L$ and lag $k$, the estimator $\hat\rho_{\CKA}(k)$ in
Eq.~(4) of the main paper is a finite average of $L - k$ convergent terms:
$\hat\rho_{\CKA}(k) \xrightarrow{p} \rho^*_{\CKA}(k)$, where
\[
  \rho^*_{\CKA}(k) = \frac{1}{L-k}\sum_{\ell=1}^{L-k} \CKA^*(\ell, \ell+k). \qedhere
\]
\end{proof}

\section{Maximum Lag Sensitivity}
\label{app:k_sensitivity}

Although the paper uses the full autocorrelation ($K = L{-}1$) throughout,
it is still useful to examine how $\Deff/L$ changes as a function of the
maximum included lag.  Table~\ref{tab:k_sensitivity} reports this sensitivity
for representative models.

\begin{table}[t]
  \centering
  \caption{Sensitivity of $\Deff/L$ to maximum lag $K$.  The paper uses full
           $K = L{-}1$ throughout.  For most models values stabilize by
           $K = 25$, while for Qwen3.5-27B the gap between $K = 25$ and
           $K = L{-}1$ is 0.009.}
  \label{tab:k_sensitivity}
  \resizebox{\linewidth}{!}{%
  \begin{tabular}{lcccccc}
    \toprule
    Model & $L$ & $K{=}5$ & $K{=}10$ & $K{=}15$ & $K{=}25$ & $K{=}L{-}1$ \\
    \midrule
    Pythia-70M    &  6 & 0.239 & -- & -- & -- & -- \\
    Pythia-160M   & 12 & 0.145 & 0.112 & -- & -- & 0.110 \\
    Pythia-1.4B   & 24 & 0.116 & 0.076 & 0.064 & -- & 0.057 \\
    Qwen3.5-4B    & 32 & 0.102 & 0.060 & 0.045 & 0.035 & 0.034 \\
    OLMo-2-7B     & 32 & 0.103 & 0.061 & 0.047 & 0.037 & 0.035 \\
    Llama-3.1-8B  & 32 & 0.106 & 0.064 & 0.052 & 0.046 & 0.045 \\
    OLMo-2-13B    & 40 & 0.100 & 0.058 & 0.043 & 0.032 & 0.029 \\
    Qwen3.5-27B   & 64 & 0.103 & 0.059 & 0.043 & 0.030 & 0.021 \\
    \bottomrule
  \end{tabular}%
  }
\end{table}

At $K = 5$ all models show $\Deff/L \approx 0.10$, reflecting only
short-range similarity.  As $K$ increases, $\Deff/L$ decreases monotonically
because the Bartlett taper includes progressively more high-lag terms
(which remain close to~1 for deep models).  The values stabilize by
$K = 25$ for models with $L \leq 40$; for Qwen3.5-27B ($L = 64$), the
difference between $K = 25$ and $K = 63$ is 0.009, reflecting the additional
high-lag information available in very deep models.  These results clarify
why earlier truncated calculations often appeared numerically close for
shallower models, while the full-$K$ definition remains the correct paper
setting.

\section{Alternative Aggregation Functions}
\label{app:alt_aggregations}

The weighted full-lag formula (Eq.~(5) of the main paper) is one way to aggregate the
layer similarity structure into a scalar.  To verify that the conclusions
do not depend on this specific choice, we apply three additional
aggregations to the full $L \times L$ CKA similarity matrix
$C_{ij} = \CKA(H_i, H_j)$.

\lead{Participation ratio \citep{roy2020effective}.}
Given the eigenvalues $\lambda_1 \geq \cdots \geq \lambda_L$ of $C$:
\[
  \mathrm{PR} = \frac{\bigl(\sum_i \lambda_i\bigr)^2}{\sum_i \lambda_i^2}.
\]
PR equals $L$ when all eigenvalues are equal (all layers independent) and
equals~1 when a single eigenvalue dominates (all layers identical).  It
requires no lag structure and operates on the full matrix spectrum.

\lead{Effective rank \citep{roy2007effective}.}
\[
  \mathrm{ER} = \exp\!\left(-\sum_i p_i \log p_i\right),
  \quad p_i = \frac{\lambda_i}{\sum_j \lambda_j}.
\]
ER is the exponential of the spectral entropy.  Like PR, it ranges from~1
to $L$ and captures the effective dimensionality of the similarity matrix.

\lead{Threshold counting.}
\[
  \mathrm{TC}_\tau = 1 + \sum_{\ell=1}^{L-1} \mathbf{1}[\CKA(H_\ell, H_{\ell+1}) < \tau].
\]
TC counts the number of adjacent-layer transitions where similarity drops
below a threshold $\tau$ (we use $\tau = 0.95$).  This is the simplest
possible aggregation and requires no eigendecomposition.

\lead{Results.}
Figure~\ref{fig:aggregations} compares all four aggregation methods
(Bartlett, PR, ER, TC$_{0.95}$) across thirteen models with $L \geq 16$.
Despite fundamentally different mathematical formulations, all four methods
agree on the model ordering: deeper models consistently have lower
effective depth under every aggregation.  Bartlett, PR, and ER track
closely (Spearman $\rho > 0.98$ pairwise); TC$_{0.95}$ is noisier but
preserves the same qualitative pattern.

\begin{figure}[t]
  \centering
  \includegraphics[width=0.85\linewidth]{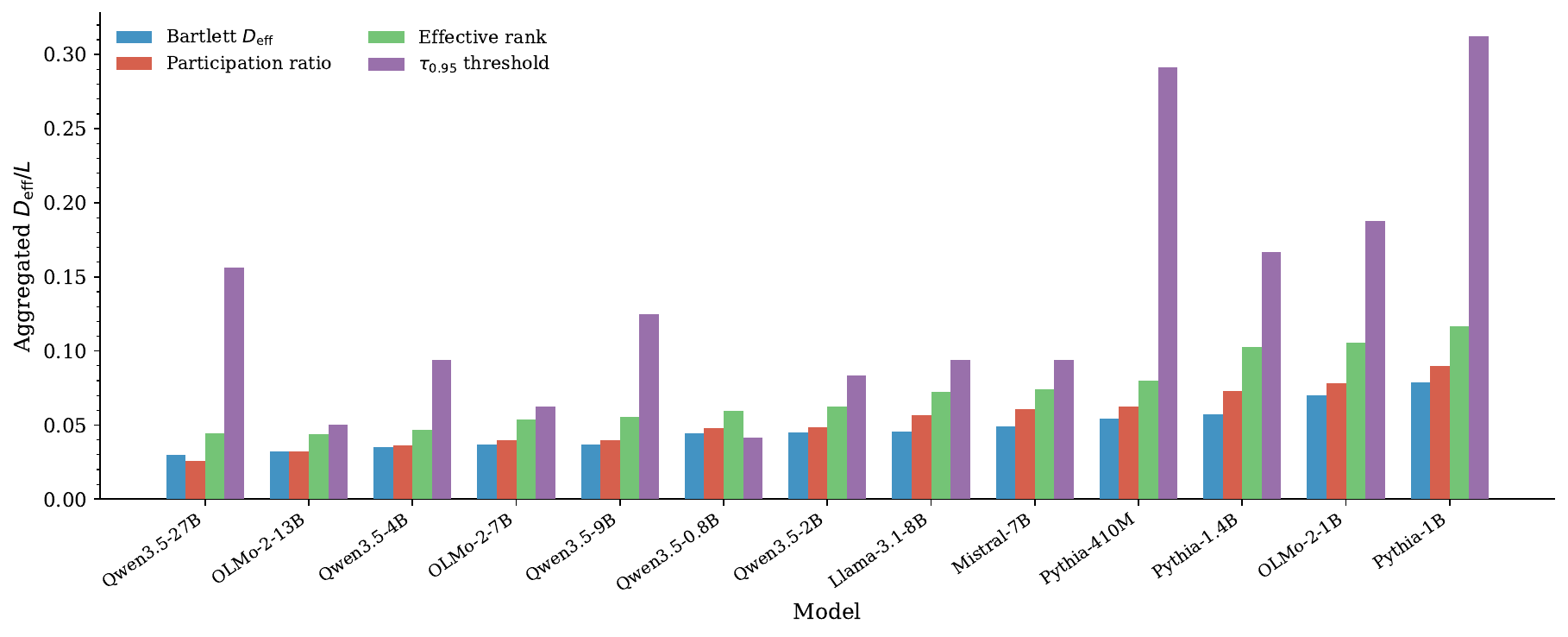}
  \caption{Four independent aggregation methods applied to the same
           CKA similarity matrices.  Despite different mathematical
           formulations, all methods agree on the model ordering and
           confirm that deep models ($L \geq 32$) have effective depth
           below 10\% of $L$.}
  \label{fig:aggregations}
\end{figure}

\section{Covariance Decay under Beta-Mixing (Norm Process)}
\label{app:mixing_connection}

The paper's main diagnostic $\Deff$ uses CKA-based layer similarity
(Definition~1 of the main paper).  For the \emph{norm-based} estimator
$\hat\rho^{\mathrm{norm}}$ (Eq.~\ref{eq:rho_norm}), we can bound the
autocovariance using the mixing coefficients of the underlying scalar process.

\begin{proposition}[Covariance bound for the norm process]
\label{prop:davydov}
Let $Z_\ell = \|h_\ell\|_2$ for $\ell = 1, \ldots, L$, and assume
$\mathrm{Var}(Z_\ell) > 0$ and $\EE[|Z_\ell|^{2+\delta}] < \infty$ for some
$\delta > 0$.  If the scalar process $(Z_\ell)$ is stationary and beta-mixing
with coefficients $\betamix(k)$, then for all $k \geq 1$:
\[
  |\mathrm{Cov}(Z_1, Z_{1+k})| \;\leq\;
  C_\delta \;\betamix(k)^{\delta/(2+\delta)} \;
  \left(\EE[|Z_1 - \EE Z_1|^{2+\delta}]\right)^{2/(2+\delta)},
\]
where $C_\delta$ is a constant depending only on $\delta$.  Dividing by
$\mathrm{Var}(Z)$ yields a bound on $|\rho^{\mathrm{norm}}(k)|$.
\end{proposition}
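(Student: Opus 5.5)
This is a direct application of Davydov's covariance inequality for strongly mixing sequences. The plan has three steps: reduce beta-mixing to alpha-mixing, apply Davydov with equal moment exponents, and use stationarity to express everything in terms of $Z_1$.

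\textbf{Step 1: reduce beta-mixing to alpha-mixing.} Recall the definitions
\[
  \alpha(k)=\sup_{A\in\sigma(Z_1),\,B\in\sigma(Z_{1+k})}|P(A\cap B)-P(A)P(B)|.
\]
For $\betamix(k)$ we take the total-variation distance between the joint law of $(Z_1,Z_{1+k})$ and the product of its marginals. (The process-level coefficient over past and future $\sigma$-fields dominates this pairwise one, so working pairwise suffices.) Since any rectangle event $A\times B$ is a measurable set, the standard inequality $2\alpha(k)\le\betamix(k)$ follows, and in particular $\alpha(k)\le\betamix(k)$.

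\textbf{Step 2: apply Davydov's inequality.} We use the version of Davydov's inequality (e.g.\ Rio, or Doukhan's monograph) valid for $p,q,r\ge 1$ with $1/p+1/q+1/r=1$:
\[
  |\mathrm{Cov}(X,Y)|\le C\,\alpha^{1/r}\,\|X\|_p\,\|Y\|_q ,
\]
with, for instance, $C=8$. Take $X=Z_1-\EE Z_1$ and $Y=Z_{1+k}-\EE Z_{1+k}$; centering does not change the covariance. Choose $p=q=2+\delta$, which forces $1/r=1-2/(2+\delta)=\delta/(2+\delta)$. The moment assumption $\EE|Z_\ell|^{2+\delta}<\infty$ gives finite centered moments via Minkowski's inequality, so both norms are finite.

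\textbf{Step 3: use stationarity and conclude.} Stationarity makes $Z_{1+k}$ equal in law to $Z_1$, so $\|Y\|_{2+\delta}=\|X\|_{2+\delta}$. The product of the two norms is therefore
\[
  \|X\|_{2+\delta}^2=\bigl(\EE|Z_1-\EE Z_1|^{2+\delta}\bigr)^{2/(2+\delta)}.
\]
Combining this with Steps 1 and 2 and monotonicity of $t\mapsto t^{\delta/(2+\delta)}$ yields the stated bound, with $C_\delta=C$ (or $2^{-\delta/(2+\delta)}C$ if the sharper factor from Step 1 is kept). Dividing by $\mathrm{Var}(Z_1)>0$, which is constant in $\ell$ by stationarity, gives the bound on $|\rho^{\mathrm{norm}}(k)|$.

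\textbf{Main obstacle.} The only delicate point is bookkeeping the mixing conventions. Beta coefficients sometimes carry a factor $1/2$, and alpha-mixing is sometimes defined on the process $\sigma$-fields rather than pairwise. Under either convention the reduction $\alpha\le\betamix$ or $\alpha\le\betamix/2$ holds, so these differences are absorbed into $C_\delta$. I would state explicitly which convention is used and cite Davydov's inequality rather than reprove it. If a self-contained proof were wanted, the route would be the quantile-coupling argument via Rio's covariance inequality, which implies Davydov's inequality after applying Hölder.
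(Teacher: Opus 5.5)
Your proposal is correct and follows the paper's route. The paper applies the Davydov-type covariance inequality (cited from Bradley's survey and stated directly in terms of $\betamix(k)$) with $p=q=2+\delta$ to the centered variables $Z_1-\EE Z_1$ and $Z_{1+k}-\EE Z_{1+k}$; your explicit reduction $\alpha(k)\le\betamix(k)$ and your stationarity step spell out what the paper leaves implicit. One small precision: comparing on rectangles $A\times B$ gives only $\alpha(k)\le\betamix(k)$, and the factor $2$ in $2\alpha(k)\le\betamix(k)$ comes from testing the set $(A\times B)\cup(A^c\times B^c)$, but since you only use the weaker inequality this does not affect your argument.
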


\begin{proof}
This is a standard covariance inequality for beta-mixing processes.  We apply
the result of~\citet[Theorem~3.3]{bradley2005basic}: for real-valued random
variables $U \in \mathcal{F}_0^t$ and $V \in \mathcal{F}_{t+k}^\infty$ with
$\EE[|U|^p] < \infty$ and $\EE[|V|^q] < \infty$ where $1/p + 1/q < 1$,
\[
  |\mathrm{Cov}(U, V)| \;\leq\; C_{p,q} \;\betamix(k)^{1 - 1/p - 1/q}\;
  \|U\|_p \;\|V\|_q.
\]
Setting $p = q = 2 + \delta$, $U = Z_1 - \EE Z_1$, $V = Z_{1+k} - \EE Z_{1+k}$
gives $1/p + 1/q = 2/(2+\delta) < 1$ (for $\delta > 0$), hence the exponent on
$\betamix(k)$ is $1 - 2/(2+\delta) = \delta/(2+\delta)$.
\end{proof}

\begin{remark}
This bound applies to the scalar norm process $(Z_\ell)$ and the norm-based
estimator $\hat\rho^{\mathrm{norm}}$.  It does not directly bound the CKA-based
$\hat\rho(k)$ used in the paper's primary $\Deff$ definition.  Establishing an
analogous bound for CKA would require a multivariate extension beyond the
present scope.
\end{remark}

\section{Alternative Tapers for $\Deff$}
\label{app:tapers}

The Bartlett taper $w(k) = 1 - k/L$ in Eq.~(5) of the main paper is one of several
standard kernel choices for spectral density estimation at zero frequency.
Alternatives include:

\lead{Parzen taper.}
\[
  w_{\mathrm{Parzen}}(k) = \begin{cases}
    1 - 6(k/M)^2 + 6(k/M)^3 & \text{if } 0 \leq k \leq M/2, \\
    2(1 - k/M)^3 & \text{if } M/2 < k \leq M, \\
    0 & \text{if } k > M,
  \end{cases}
\]
where $M \leq L$ is a bandwidth parameter.

\lead{Tukey--Hanning taper.}
\[
  w_{\mathrm{TH}}(k) = \tfrac{1}{2}\bigl(1 + \cos(\pi k / M)\bigr)
  \quad \text{for } k \leq M.
\]

All kernel choices yield $\Deff$ values that are qualitatively similar.  We
use the Bartlett taper throughout because it is a simple full-lag weighting
scheme and requires no additional bandwidth parameter.

\section{Additional Experimental Details}
\label{app:exp_details}

\lead{Hardware.}
All hidden-state extractions run on 8$\times$H100 80GB GPUs (university HPC cluster).
Analysis ($\Deff$ computation, CKA matrices, figures) runs on CPU.

\lead{Hidden-state extraction.}
Models are loaded in bfloat16.  For each input passage of up to 512 tokens,
we run a single forward pass with \texttt{output\_hidden\_states=True} and
mean-pool each layer's output over valid (non-padding) token positions.
The result is a tensor of shape $(n, L+1, d)$ per model-dataset pair, saved
as \texttt{hidden\_states.pt}.  We discard layer 0 (embedding) before computing
$\Deff$, using layers $1, \ldots, L$ as specified in Definition~1 of the main paper.

\lead{CKA computation.}
Column-centering is applied before computing linear CKA.  For numerical
stability, we divide the cross-covariance $\tilde H_m^\top \tilde H_\ell$
by $n$ before computing the Frobenius norm.  We use $N = 10{,}000$ samples
so that $N>d$ for every evaluated model; the synthetic and unbiased-CKA
checks above quantify the remaining finite-sample behaviour.

\lead{Norm-based and PCA estimators.}
The norm-based estimator uses the Frobenius norm $\|H_\ell\|_F / \sqrt{nd}$
at each layer.  The PCA estimator projects to the top $\min(64, d, n)$
principal components computed from the concatenation of all layers.

\lead{Null baseline: seed stability and figure.}
For seven architectures we evaluate random-weight $\Deff/L$ under
five independent default-Hugging-Face initialisations each
($N{=}10{,}000$, $K{=}L{-}1$).  Table~\ref{tab:null_seed_stability}
shows that random-weight $\Deff/L$ is seed-stable, with standard
deviation at most $2\times 10^{-4}$.  Trained $\Deff/L$ lies outside
the random-seed range and outside the mean $\pm 2\sigma$ band in every
case, but the direction depends on architecture: Qwen3.5 and OLMo-2-1B
move below their random-weight references, while larger OLMo-2 and
Llama-3.1-8B move above them.  Figure~\ref{fig:null_multi}
visualises Table~1 of the main paper.

\begin{table}[t]
  \centering
  \caption{Five-seed random-weight stability for $\Deff/L$
           ($N{=}10{,}000$, FineWeb-Edu, $K{=}L{-}1$).  The narrow
           random-weight bands show that the null baseline is nearly
           deterministic for a fixed architecture and initialisation
           scheme; trained values fall outside the random range in all
           seven cases.}
  \label{tab:null_seed_stability}
  \resizebox{\linewidth}{!}{%
  \begin{tabular}{lcccc}
    \toprule
    Model & Trained & Random mean & Random range & mean $\pm 2\sigma$ \\
    \midrule
    OLMo-2-1B      & 0.0702 & 0.0741 & [0.0739, 0.0743] & [0.0737, 0.0745] \\
    OLMo-2-7B      & 0.0369 & 0.0361 & [0.0360, 0.0362] & [0.0359, 0.0362] \\
    OLMo-2-13B     & 0.0294 & 0.0286 & [0.0286, 0.0287] & [0.0285, 0.0287] \\
    Qwen3.5-4B     & 0.0350 & 0.0469 & [0.0468, 0.0470] & [0.0468, 0.0471] \\
    Qwen3.5-9B     & 0.0369 & 0.0442 & [0.0441, 0.0442] & [0.0441, 0.0443] \\
    Qwen3.5-27B    & 0.0211 & 0.0217 & [0.0216, 0.0218] & [0.0216, 0.0218] \\
    Llama-3.1-8B   & 0.0468 & 0.0347 & [0.0345, 0.0348] & [0.0344, 0.0349] \\
    \bottomrule
  \end{tabular}
  }
\end{table}

\begin{figure}[t]
  \centering
  \includegraphics[width=0.85\linewidth]{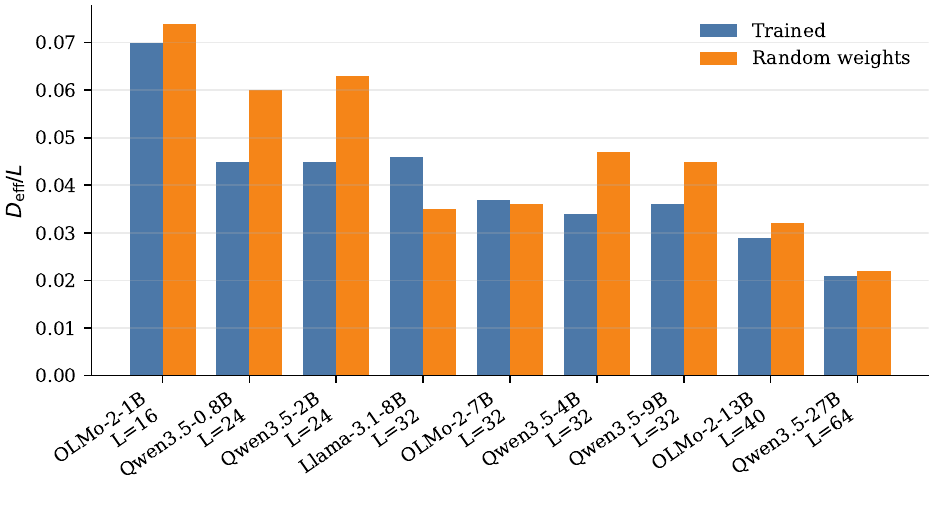}
  \caption{Trained vs.\ random-weight $\Deff/L$ across model families
           for completed pairs (sorted by depth $L$).  Bar heights
           remain in the same low range for trained and random,
           indicating a strong residual-geometric component.}
  \label{fig:null_multi}
\end{figure}

\section{Geodesic Efficiency vs.\ $\Deff/L$}
\label{app:geodesic}

We compute the geodesic efficiency
$\eta_{\mathrm{geo}} = \|h_L - h_1\|_2 / \sum_{\ell=2}^{L} \|h_\ell - h_{\ell-1}\|_2$
for the same 1{,}000 input passages and plot it against $\Deff/L$ in
\cref{fig:geo_corr}.  The scatter plot reveals a positive but moderate
correlation, indicating that the two measures capture related but distinct
aspects of layer utilisation.

\begin{figure}[t]
  \centering
  \includegraphics[width=0.6\linewidth]{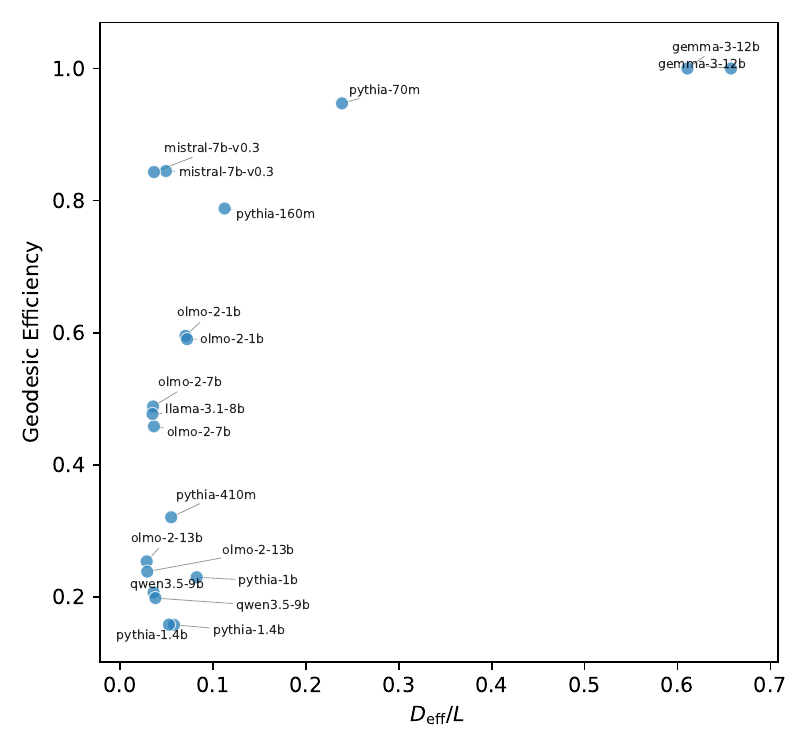}
  \caption{$\Deff/L$ vs.\ geodesic efficiency across all
           model--dataset pairs.  Pearson $r = 0.65$.  The moderate
           correlation confirms that the CKA-based diagnostic and the
           geometric measure capture related but distinct aspects of
           layer utilisation.}
  \label{fig:geo_corr}
\end{figure}

The correlation ($r = 0.65$) is positive but moderate.
Geodesic efficiency measures the \emph{directness} of the residual-stream
path, while $\Deff$ measures the \emph{statistical independence} of
adjacent layers.  A model can take a straight path (high $\eta_{\mathrm{geo}}$)
through representation space while still producing highly similar
intermediate representations (low $\Deff/L$).

\section{Norm-Based and PCA Estimators}
\label{app:estimators}

\lead{Estimator 1 (Norm-based, scalar).}
Let $s_\ell = \|H_\ell\|_F / \sqrt{nd}$ be the normalized Frobenius norm at
layer $\ell$.  Define
\begin{equation}
  \hat\rho^{\mathrm{norm}}(k)
    = \frac{\sum_{\ell=1}^{L-k}(s_\ell - \bar s)(s_{\ell+k} - \bar s)}
           {\sum_{\ell=1}^{L}(s_\ell - \bar s)^2},
  \quad \bar s = \tfrac{1}{L}\sum_{\ell=1}^L s_\ell.
  \label{eq:rho_norm}
\end{equation}
This is the cheapest estimator (one scalar per layer) but discards directional
information.

\lead{Estimator 3 (PCA projection).}
Compute a shared PCA basis $V \in \RR^{d \times r}$ from the concatenated
hidden states $[H_1; \ldots; H_L]$.  Project: $\tilde H_\ell = H_\ell V$.
Then
\begin{equation}
  \hat\rho^{\mathrm{PCA}}(k)
    = \frac{\mathrm{tr}(C(k))}{\mathrm{tr}(C(0))},
  \label{eq:rho_pca}
\end{equation}
where $C(k) = \frac{1}{L-k}\sum_{\ell=1}^{L-k} \tilde H_\ell^\top \tilde H_{\ell+k}$
is the lag-$k$ cross-covariance.

The three estimators target related but distinct population quantities:
$\hat\rho^{\mathrm{norm}}$ measures autocorrelation of a scalar summary,
$\hat\rho^{\mathrm{CKA}}$ measures average representational similarity,
and $\hat\rho^{\mathrm{PCA}}$ measures projected cross-covariance.
They agree qualitatively in practice but are not interchangeable in general.

\section{Multi-Metric Robustness}
\label{app:metric_robustness}

\begin{table}[t]
  \centering
  \caption{Auxiliary metric-robustness check: $\Deff/L$ under three
           similarity metrics on WikiText-103.
           CKA and unbiased HSIC-CKA agree within 0.002; cosine similarity
           is slightly higher but follows the same pattern.}
  \label{tab:metric_robustness}
  \begin{tabular}{lccc}
    \toprule
    Model & CKA & Unbiased HSIC-CKA & Cosine \\
    \midrule
    Pythia-70M     & 0.239 & 0.240 & 0.277 \\
    OLMo-2-1B     & 0.071 & 0.071 & 0.137 \\
    Llama-3.1-8B  & 0.035 & 0.035 & 0.051 \\
    Mistral-7B    & 0.036 & 0.036 & 0.047 \\
    OLMo-2-7B     & 0.037 & 0.038 & 0.054 \\
    OLMo-2-13B    & 0.033 & 0.033 & 0.046 \\
    Qwen3.5-9B    & 0.038 & 0.039 & 0.041 \\
    \bottomrule
  \end{tabular}
\end{table}

This auxiliary check uses WikiText-103 rather than the FineWeb-Edu calibration
set used in the main results, and is intended only to test whether the
similarity metric changes the qualitative ordering.  CKA and unbiased
HSIC-CKA are nearly identical across all models (maximum
difference 0.002), confirming that the choice between standard and unbiased
CKA is negligible in the large-sample regime relevant to the paper.  Cosine
similarity produces slightly higher
$\Deff/L$ but exhibits the same ordering and the same qualitative conclusion:
all 7B+ models have $\Deff/L < 0.055$.

\section{Instruction-Tuned vs.\ Base Models}
\label{app:instruct_base}

\begin{table}[t]
  \centering
  \caption{Instruction-tuned vs.\ base model $\Deff/L$ ($N{=}10{,}000$).
           Differences are negligible ($\leq 0.002$), indicating that
           post-training alignment leaves the residual-stream geometry largely
           unchanged.}
  \label{tab:instruct_base}
  \begin{tabular}{lcccc}
    \toprule
    Model & $L$ & Instruct & Base & $|\Delta|$ \\
    \midrule
    Qwen3.5-0.8B  & 24 & 0.045 & 0.045 & $<$0.001 \\
    Qwen3.5-4B    & 32 & 0.035 & 0.035 & $<$0.001 \\
    Llama-3.1-8B  & 32 & 0.047 & 0.047 & 0.002 \\
    \bottomrule
  \end{tabular}
\end{table}

This result is consistent with the null-baseline finding in the main text:
if random weights and trained weights already yield similar $\Deff$, it is
unsurprising that the relatively mild parameter changes introduced by
instruction tuning also leave $\Deff$ nearly unchanged.

\section{ShortGPT Analysis: Shallow Models}
\label{app:shortgpt_shallow}

ShortGPT's BI score $\BI(\ell) = 1 - \cos(h_\ell, h_{\ell+1})$ captures the
lag-1 term of the CKA autocorrelation.  Across the evaluated models, Pearson
$r > 0.97$ between mean BI and $1 - \hat\rho(1)$, confirming that BI and the
lag-1 CKA view are closely related in deep, lag-1-dominated regimes.  This is
a limitation as well as a useful boundary check: $\Deff$ is not meant to replace BI as
a local layer score.  Its added role is the calibrated model-level aggregation
of the full lag profile, together with the $F_L$ reference.

For Pythia-70M ($L = 6$), the Bartlett weights are nearly uniform across all
5 lags, and higher-lag terms contribute substantially to $\Deff$.  A
lag-1-only diagnostic would assign an incorrect effective depth to Pythia-70M
because it ignores the rapid decorrelation at higher lags ($\hat\rho(1) = 0.803$
but $\hat\rho(5) < 0.4$, as the autocorrelation profile decays appreciably
across the full 5-lag range).  The Bartlett aggregation naturally interpolates
between shallow and deep regimes: for large $L$ the taper $(1 - k/L)$
suppresses high-lag terms, recovering the lag-1-dominated regime; for small $L$
the taper is nearly flat and all lags are weighted comparably.  More broadly,
$\Deff$ correctly handles the boundary between shallow models (where most layers
contribute independently) and deep models (where lag-1 dominates).

\section{Per-Layer Update Diagnostic}
\label{app:update_deff}

The accumulated-state $\Deff$ on $h_\ell$ is the primary diagnostic in this
paper, but a complementary view computes the same Bartlett aggregation on
the per-layer updates $f_\ell = h_\ell - h_{\ell-1}$.  This update-level
diagnostic, $\Deff^{\mathrm{update}}$, asks whether the updates themselves
are correlated rather than only their accumulated sums.

\begin{table}[t]
  \centering
  \caption{Accumulated $\Deff/L$ vs.\ update-based
           $\Deff^{\mathrm{update}}/L$ ($N{=}10{,}000$, FineWeb-Edu).
           Synthetic orthogonal-update controls show that accumulation alone
           can keep $\Deff(h)$ low; this table asks whether real-model
           updates are also correlated.  They are.}
  \label{tab:update_deff}
  \resizebox{\linewidth}{!}{%
  \begin{tabular}{lccccc}
    \toprule
    Model & $L$ & $\Deff/L$ & $\Deff^{\mathrm{upd}}/L$
          & $\hat\rho^{\mathrm{upd}}(1)$ & $\|f\|/\|h\|$ \\
    \midrule
    Pythia-160M   & 12 & 0.112 & 0.154 & 0.648 & 0.54 \\
    OLMo-2-1B     & 16 & 0.070 & 0.080 & 0.846 & 0.50 \\
    Qwen3.5-4B    & 32 & 0.034 & 0.045 & 0.787 & 0.31 \\
    Llama-3.1-8B  & 32 & 0.046 & 0.045 & 0.806 & 0.37 \\
    OLMo-2-13B    & 40 & 0.029 & 0.041 & 0.796 & 0.29 \\
    Qwen3.5-27B   & 64 & 0.021 & 0.033 & 0.601 & 0.18 \\
    \bottomrule
  \end{tabular}%
  }
\end{table}

$\Deff^{\mathrm{update}}/L$ is only modestly higher than the accumulated
$\Deff/L$ and remains below $0.16$ for all completed models.  Combined
with the synthetic orthogonal-update construction
(Appendix~\ref{app:synthetic_controls}), this shows that both effects
matter: residual accumulation compresses the sequence of states seen by
downstream layers, and the updates themselves are also correlated.  This
is consistent with the ``iterative inference'' view of
\citet{jastrzebski2018residual} and the ``curse of depth'' identified by
\citet{sun2025curse}, both of which show that deeper layers produce
small, correlated updates.  The mean update-to-accumulated norm ratio
$\|f_\ell\|/\|h_\ell\|$ decreases in the deeper models, from $0.54$ at
$L{=}12$ to $0.29$--$0.37$ for $L{=}32$--$40$ and $0.18$ at $L{=}64$,
reflecting the growing dominance of the accumulated residual stream.

\section{Controlled Residual-Carry Intervention}
\label{app:residual_carry}

To probe the residual-carry mechanism causally, we train a 12-layer nanoGPT
on the \texttt{shakespeare\_char} corpus with the block equation
\begin{equation}
  h_{\ell+1} \;=\; \gamma\, h_\ell + \mathrm{attn}(\mathrm{LN}(h_\ell)),
  \qquad
  h_{\ell+1} \;=\; \gamma\, h_{\ell+1} + \mathrm{mlp}(\mathrm{LN}(h_{\ell+1})),
\end{equation}
and sweep $\gamma \in \{1.0, 0.75, 0.5, 0.25\}$ over multiple random seeds
($n_{\mathrm{layer}}{=}12$, $n_{\mathrm{embd}}{=}384$, $n_{\mathrm{head}}{=}6$,
$\mathrm{block\_size}{=}256$, $\mathrm{batch\_size}{=}64$,
$\mathrm{max\_iters}{=}5000$, $\mathrm{lr}{=}10^{-3}$ constant).  We track
$\Deff(h)$, $\Deff(\mathrm{update})$, validation loss, and the
update-to-state norm ratio at training checkpoints; full trajectories are
in Figure~\ref{fig:nanogpt_residual_carry_grid}.

\begin{table}[t]
  \centering
  \caption{Residual-carry intervention on a 12-layer nanoGPT
           (\texttt{shakespeare\_char}, iter 5000).  Values are mean across
           seeds with [min, max] in brackets ($n{=}3$ seeds for
           $\gamma \in \{1.0, 0.75\}$, $n{=}2$ for $\gamma \in \{0.5, 0.25\}$).
           At $\gamma{=}1.0$, training has already begun to overfit by iter
           5000 (best validation loss 1.56 at iter 1500).}
  \label{tab:residual_carry}
  \begin{tabular}{lcccc}
    \toprule
    $\gamma$ & val.\ loss & $\Deff(h)/L$ & $\hat\rho_h(1)$ & $\|f\|/\|h\|$ \\
    \midrule
    1.00 & 3.46 [3.30,\,3.58] & 0.094 [0.093,\,0.095] & 0.96 & 0.32 \\
    0.75 & 1.76 [1.71,\,1.86] & 0.092 [0.087,\,0.098] & 0.95 & 0.82 \\
    0.50 & 3.35 [3.35,\,3.35] & 0.124 [0.112,\,0.137] & 0.84 & 0.82 \\
    0.25 & 3.35 [3.35,\,3.35] & 0.268 [0.211,\,0.325] & 0.52 & 1.25 \\
    \bottomrule
  \end{tabular}
\end{table}

\begin{figure}[t]
  \centering
  \includegraphics[width=0.95\linewidth]{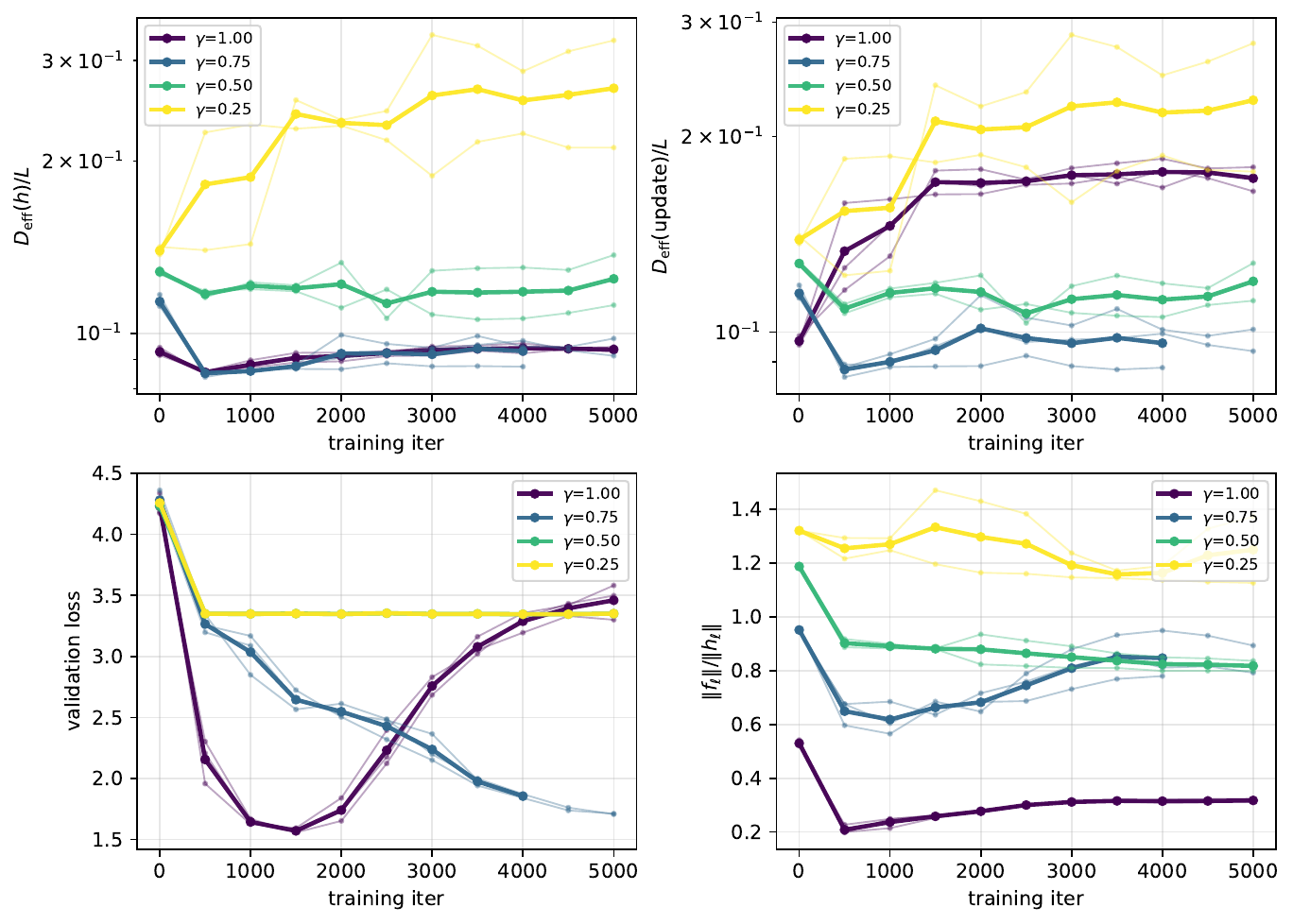}
  \caption{Controlled residual-carry intervention on a 12-layer nanoGPT.
           Bold lines are the across-seed mean at each $\gamma$; faint lines
           are individual seeds.  Top: $\Deff(h)/L$ and $\Deff(\mathrm{update})/L$
           trajectories across training, color-coded by $\gamma$.  Bottom:
           validation loss and the update-to-state norm ratio.  Substantially
           reducing $\gamma$ ($\gamma{\le}0.5$) raises $\Deff(h)/L$ from
           initialisation onward in every seed, but training fails;
           $\gamma{=}0.75$ leaves $\Deff(h)/L$ statistically indistinguishable
           from $\gamma{=}1.0$ while training slightly better.}
  \label{fig:nanogpt_residual_carry_grid}
\end{figure}

\lead{Result.} Two regimes are visible.  In the small-perturbation
regime ($\gamma{=}0.75$), $\Deff(h)/L$ is statistically indistinguishable
from $\gamma{=}1.0$ (mean $0.092$ vs $0.094$; per-seed ranges
$[0.087, 0.098]$ vs $[0.093, 0.095]$ overlap fully) and the model trains
slightly better at this iteration budget.  In the large-perturbation
regime ($\gamma \le 0.5$), $\Deff(h)/L$ rises substantially (mean $0.124$
at $\gamma{=}0.5$ and $0.268$ at $\gamma{=}0.25$, vs $0.094$ at
$\gamma{=}1.0$) in every seed, while validation loss plateaus at the
unigram baseline $\sim 3.35$ in every seed.  The same ordering between
the trains-vs-fails-to-train regimes is visible from initialisation onward
(Figure~\ref{fig:nanogpt_residual_carry_grid}).

\lead{Interpretation.} The intervention supports two compatible
claims in this controlled setting: (i) residual carry is a causal
contributor to accumulated-state effective depth, since holding
everything else fixed, large reductions in $\gamma$ shift
$\Deff(h)/L$ upward in every seed; and (ii) escaping the
low-$\Deff(h)$ regime can conflict with stable optimisation, since the
only $\gamma$ values that substantially raise $\Deff(h)/L$ also break
optimisation.  At standard $\gamma{=}1.0$, the model achieves a low
$\Deff(h)/L$ qualitatively similar to the real-LLM values from the main
experiments; mild weakening ($\gamma{=}0.75$) leaves $\Deff(h)/L$
unchanged, and stronger weakening diversifies accumulated states only at
a steep optimisation cost.  These claims are limited to this char-level
nanoGPT setup; broader claims would require additional scales and tasks
beyond this controlled experiment.

\section{Bootstrap Stability of $\Deff/L$}
\label{app:bootstrap_ci}

To verify that the $\Deff/L$ values reported in Table~2 of the main paper
are not artefacts of a particular sample of FineWeb-Edu passages, we
re-extract hidden states for five representative architectures and run
$B{=}100$ passage-level bootstrap resamples (with replacement) from the
$N{=}10{,}000$ calibration set, recomputing $\Deff/L$ on each resample.
Table~\ref{tab:bootstrap_ci} reports the bootstrap mean and the 95\%
percentile confidence interval.

\begin{table}[t]
  \centering
  \caption{Passage-bootstrap $\Deff/L$ on five representative architectures
           ($N{=}10{,}000$, $B{=}100$).  The bootstrap mean matches the
           point estimate to four decimal places, and the 95\% CI width is
           at most $3 \times 10^{-4}$ in every case.  At the three-decimal
           precision used in Table~2 of the main paper, the interval
           collapses to the reported point estimate, so we omit it there.}
  \label{tab:bootstrap_ci}
  \resizebox{\linewidth}{!}{%
  \begin{tabular}{lccccc}
    \toprule
    Model & $L$ & Point $\Deff/L$ & Bootstrap mean & 95\% CI & CI width \\
    \midrule
    Llama-3.1-8B & 32 & 0.0461 & 0.0461 & [0.0459, 0.0462] & 0.0003 \\
    OLMo-2-7B    & 32 & 0.0357 & 0.0356 & [0.0356, 0.0357] & 0.0001 \\
    Qwen3.5-4B   & 32 & 0.0337 & 0.0337 & [0.0336, 0.0337] & 0.0001 \\
    Qwen3.5-9B   & 32 & 0.0358 & 0.0358 & [0.0357, 0.0359] & 0.0002 \\
    Qwen3.5-27B  & 64 & 0.0211 & 0.0211 & [0.0210, 0.0212] & 0.0002 \\
    \bottomrule
  \end{tabular}%
  }
\end{table}

Bootstrap CI widths are tight relative to the between-model differences in
Table~2 of the main paper, so the model orderings visualised in
Figure~1 of the main paper are not affected by sample-level
Monte Carlo error.  Bootstrap resampling is
at the passage level, so these CIs reflect uncertainty over the
calibration corpus only; uncertainty over alternative model
initialisations is treated separately by the seed-robustness analysis in
Section~5 of the main paper.

\section{Pruning Boundary Checks}
\label{app:pruning_tolerance}

We do not propose $\Deff$ as a pruning method.  It is a global
accumulated-state diagnostic, whereas layer pruning is a local,
task-specific intervention.  As a boundary check, we compare a
per-layer leave-one-out diagnostic
$\Delta_\ell = \Deff(\text{full}) - \Deff(\text{remove }\ell)$
against BI by progressively skipping layers in each ranking and
measuring perplexity on a held-out FineWeb-Edu split.
Table~\ref{tab:pruning_validation} shows that BI is the correct local
tool for this question.

\begin{table}[t]
  \centering
  \caption{Boundary check: perplexity after $k$ skipped layers,
           ranked by BI vs.\ by local $\Delta\Deff$.  BI is
           substantially better at predicting safe removals on both
           models, consistent with $\Deff$ being a global
           accumulated-state diagnostic rather than a per-layer
           importance signal.  Spearman correlations between the two
           rankings are $\hat\rho_s = -0.27$ ($p = 0.31$) for
           OLMo-2-1B and $\hat\rho_s = -0.19$ ($p = 0.30$) for
           Llama-3.1-8B.}
  \label{tab:pruning_validation}
  \resizebox{\linewidth}{!}{%
  \begin{tabular}{llcccccc}
    \toprule
    Model & Method & $k{=}0$ & $k{=}1$ & $k{=}2$ & $k{=}4$ & $k{=}6$ & $k{=}8$ \\
    \midrule
    OLMo-2-1B    & BI             & 6.5 & 9.1   & 17.9    & 85.0     & 2{,}021    & 16{,}091    \\
    OLMo-2-1B    & Local $\Deff$  & 6.5 & 31.0  & 1{,}322 & 87{,}606 & 190{,}276  & 2{,}768{,}762 \\
    \midrule
    Llama-3.1-8B & BI             & 4.6 & 5.0   & 5.8     & 10.0     & 18.5       & 56.1        \\
    Llama-3.1-8B & Local $\Deff$  & 4.6 & 431.1 & 662.0   & 34{,}625 & 89{,}183   & 487{,}295   \\
    \bottomrule
  \end{tabular}%
  }
\end{table}

A separate, exploratory question is
whether \emph{model-level} $\Deff/L$ correlates with how many BI-ranked
layers a model can lose before its perplexity degrades substantially.
We test this on eight decoder-only models.  We frame this as a
diagnostic robustness check, not a predictive-validity claim.

For each model we measure perplexity on a held-out FineWeb-Edu split
under BI-ranked progressive layer skipping at $k \in \{0,1,2,4,6,8\}$
and define
\begin{equation*}
  \mathrm{tolerance\_k}_{1.10}(\mathrm{model}) =
  \max \bigl\{\, k \in \{0,1,2,4,6,8\}\,:\,
        \mathrm{PPL}_k \leq 1.10 \cdot \mathrm{PPL}_0 \,\bigr\}.
\end{equation*}
We pre-register the depth-normalized fraction
$\mathrm{tolerance\_frac}_{1.10} = \mathrm{tolerance\_k}_{1.10}/L$ as the
primary metric and the raw count as the secondary.
Table~\ref{tab:pruning_tolerance_table} reports per-model values; the
Qwen3.5-27B row is from a BI-only run that omits the local-$\Deff$
leave-one-out computation (the full-grid leave-one-out CKA loop did
not finish within wall-clock budget at $L{=}64$).

\begin{table}[t]
  \centering
  \caption{Per-model BI pruning tolerance ($N{=}10{,}000$ held-out
           FineWeb-Edu).  PPL$_0$ is the baseline; $k_{1.10}$ is the
           largest $k \in \{0,1,2,4,6,8\}$ at which
           $\mathrm{PPL}_k \leq 1.10 \cdot \mathrm{PPL}_0$;
           $f_{1.10} = k_{1.10}/L$.}
  \label{tab:pruning_tolerance_table}
  \resizebox{\linewidth}{!}{%
  \begin{tabular}{lcccccc}
    \toprule
    Model & $L$ & $\Deff/L$ & gap (\%) & PPL$_0$ & $k_{1.10}$ & $f_{1.10}$ \\
    \midrule
    OLMo-2-1B    & 16 & 0.070 & $+40.5$ & 6.54 & 0 & 0.000 \\
    Pythia-1.4B  & 24 & 0.059 & $+26.2$ & 8.11 & 0 & 0.000 \\
    Mistral-7B   & 32 & 0.048 & $+20.8$ & 3.94 & 1 & 0.031 \\
    Llama-3.1-8B & 32 & 0.046 & $+24.1$ & 4.60 & 1 & 0.031 \\
    OLMo-2-7B    & 32 & 0.036 & $+40.6$ & 5.08 & 1 & 0.031 \\
    Qwen3.5-9B   & 32 & 0.036 & $+40.6$ & 6.37 & 1 & 0.031 \\
    Qwen3.5-4B   & 32 & 0.034 & $+43.9$ & 7.17 & 2 & 0.062 \\
    Qwen3.5-27B  & 64 & 0.021 & $+31.8$ & 6.01 & 2 & 0.031 \\
    \bottomrule
  \end{tabular}%
  }
\end{table}

\begin{table}[t]
  \centering
  \caption{Correlations of BI pruning tolerance with $\Deff/L$ at the
           full $n{=}8$ panel and three leave-one-out subsets that drop
           the largest- and smallest-$\Deff/L$ endpoints.
           Pre-registered primary metric: $\mathrm{frac}_{1.10}$.
           Secondary: raw $\mathrm{tolerance\_k}_{1.10}$.  All
           $p$-values are two-sided.}
  \label{tab:pruning_tolerance_corrs}
  \resizebox{\linewidth}{!}{%
  \begin{tabular}{lc cc cc}
    \toprule
    & & \multicolumn{2}{c}{$\mathrm{frac}_{1.10}$ vs $\Deff/L$}
      & \multicolumn{2}{c}{$k_{1.10}$ vs $\Deff/L$} \\
    \cmidrule(lr){3-4} \cmidrule(lr){5-6}
    Subset & $n$ & Pearson $r$ ($p$) & Spearman $\rho$ ($p$)
                 & Pearson $r$ ($p$) & Spearman $\rho$ ($p$) \\
    \midrule
    Full panel        & 8 & $-0.74$ ($0.04$) & $-0.79$ ($0.02$)
                          & $-0.90$ ($0.002$) & $-0.93$ ($0.001$) \\
    Drop OLMo-2-1B    & 7 & $-0.59$ ($0.16$) & $-0.67$ ($0.10$)
                          & $-0.87$ ($0.01$)  & $-0.91$ ($0.005$) \\
    Drop Qwen3.5-4B   & 7 & $-0.82$ ($0.03$) & $-0.80$ ($0.03$)
                          & $-0.94$ ($0.002$) & $-0.91$ ($0.005$) \\
    Drop both         & 6 & $-0.68$ ($0.14$) & $-0.66$ ($0.15$)
                          & $-0.92$ ($0.009$) & $-0.86$ ($0.03$) \\
    \bottomrule
  \end{tabular}%
  }
\end{table}

\lead{Reading.}
At the full $n{=}8$ panel both tolerance metrics correlate negatively
with $\Deff/L$: models with lower $\Deff/L$ tolerate more BI-pruned
layers.  The pre-registered primary metric, the depth-normalized
fraction $\mathrm{frac}_{1.10}$, passes the strong-correlation
threshold ($|r|, |\rho| \geq 0.7$, $p \leq 0.05$) at $n{=}8$ but does
\emph{not} survive the leave-one-out check that drops OLMo-2-1B
($|r|=0.59$, $p=0.16$).  The single-point sensitivity reflects a
measurement-granularity issue: the BI-pruning grid samples only
$k \in \{0,1,2,4,6,8\}$, so the integer $k$ values are tightly
clustered while $L$ varies $16$--$64$, and dividing by $L$ amplifies
the $L$-dependent noise.  The raw secondary metric
$\mathrm{tolerance\_k}_{1.10}$ is robust to all leave-one-out subsets
($|r| \geq 0.87$, $|\rho| \geq 0.86$, $p \leq 0.03$ in every case),
but because the fraction was the pre-registered primary, we do not
promote this to a main-text claim.

We treat the model-level pruning-tolerance check as suggestive rather
than conclusive: raw skipped-layer tolerance correlates strongly with
$\Deff/L$, but the depth-normalized tolerance fraction is less robust.
A finer $k$-grid (e.g.\ $k \in \{0,1,2,3,4,5,6,8\}$) would reduce the
granularity-induced noise in the fraction metric and is left as future
work.  The paper's main pruning claim
(Section~5 of the main paper,
Table~\ref{tab:pruning_validation}) remains that local $\Delta\Deff$
is a worse per-layer pruning ranker than BI, consistent with $\Deff$
being a global accumulated-state diagnostic rather than a per-layer
importance score.

\lead{Other predictors.}
For completeness, neither the absolute $\Deff$ nor the gap to $F_L$ are
useful predictors of BI pruning tolerance at any subset (all
$p > 0.5$).  This is consistent with the role we assign these
quantities throughout the paper: $\Deff/L$ is the relevant
\emph{redundancy} summary, while gap-to-$F_L$ is a regime indicator
that does not directly encode tolerance.

\section{Capability-Relevant Scaling Check}
\label{app:capability_correlation}

We include a deliberately conservative benchmark comparison because
one natural question is whether the global redundancy diagnostic is
also a capability predictor.  The answer from this small panel is:
\emph{not as a main claim}.  $\Deff$ is not intended to be a capability
score, and benchmark comparisons are confounded by model scale,
training data, architecture, tokenizer, and evaluation details.  We
therefore report the results only to delimit the diagnostic's scope:
all sixteen models, a modern-only subset excluding Pythia, and the
within-family Qwen3.5 panel.  OLMo-2 has only three evaluated sizes, so
within-family correlations are underpowered and omitted.  Even the
within-Qwen3.5 correlations should be read as co-movement along a
shared scaling axis, not as evidence that $\Deff/L$ predicts capability
beyond parameter count or depth.

\begin{table}[t]
  \centering
  \caption{Capability-evaluation coverage.  MMLU, ARC-Challenge, and
           HellaSwag are available for all 16 models.  GSM8K-CoT is
           evaluated only for models with at least 4B parameters; the
           Qwen3.5-27B and Gemma-3-12b GSM8K values exhibit suspicious
           answer-extraction failures and are not used for claims.}
  \label{tab:capability_coverage}
  \resizebox{\linewidth}{!}{%
  \begin{tabular}{llcccc}
    \toprule
    Family & Models & MMLU & ARC & HellaSwag & GSM8K-CoT \\
    \midrule
    Pythia & 70M, 160M, 410M, 1B, 1.4B & 5/5 & 5/5 & 5/5 & 0/5 \\
    OLMo-2 & 1B, 7B, 13B & 3/3 & 3/3 & 3/3 & 2/3 \\
    Qwen3.5 & 0.8B, 2B, 4B, 9B, 27B & 5/5 & 5/5 & 5/5 & 3/5 \\
    Llama & 3.1-8B & 1/1 & 1/1 & 1/1 & 1/1 \\
    Mistral & 7B & 1/1 & 1/1 & 1/1 & 1/1 \\
    Gemma & 3-12b & 1/1 & 1/1 & 1/1 & 1/1 \\
    \midrule
    Total & 16 models & 16/16 & 16/16 & 16/16 & 8/8 eligible \\
    \bottomrule
  \end{tabular}%
  }
\end{table}

\begin{table}[t]
  \centering
  \caption{Exploratory correlation between benchmark score and $\Deff/L$.
           Values report Pearson $r$ and Spearman $\rho$ with two-sided
           $p$-values.  All-model correlations are informative but
           confounded by family and era; the within-Qwen3.5 panel is the
           cleanest within-family scaling check, but remains collinear
           with scale and depth and is not a predictive-validity claim.}
  \label{tab:capability_corrs}
  \resizebox{\linewidth}{!}{%
  \begin{tabular}{llcccc}
    \toprule
    Panel & Task & Pearson $r$ & $p$ & Spearman $\rho$ & $p$ \\
    \midrule
    All 16 & MMLU      & $-0.640$ & $0.008$ & $-0.931$ & $<0.001$ \\
    All 16 & ARC       & $-0.664$ & $0.005$ & $-0.842$ & $<0.001$ \\
    All 16 & HellaSwag & $-0.718$ & $0.002$ & $-0.794$ & $<0.001$ \\
    All 16 & GSM8K-CoT & $-0.291$ & $0.484$ & $-0.349$ & $0.396$ \\
    \midrule
    Modern only & MMLU      & $-0.749$ & $0.008$ & -- & -- \\
    Modern only & ARC       & $-0.600$ & $0.051$ & -- & -- \\
    Modern only & HellaSwag & $-0.426$ & $0.192$ & -- & -- \\
    Modern only & GSM8K-CoT & $-0.291$ & $0.484$ & -- & -- \\
    \midrule
    Qwen3.5 & MMLU      & $-0.896$ & $0.040$ & $-0.872$ & $0.054$ \\
    Qwen3.5 & ARC       & $-0.913$ & $0.030$ & $-0.872$ & $0.054$ \\
    Qwen3.5 & HellaSwag & $-0.864$ & $0.059$ & $-0.872$ & $0.054$ \\
    Qwen3.5 & GSM8K-CoT & \multicolumn{4}{c}{$n=3$, skipped} \\
    OLMo-2 & all tasks & \multicolumn{4}{c}{$n=3$, skipped} \\
    \bottomrule
  \end{tabular}%
  }
\end{table}

\begin{table}[t]
  \centering
  \caption{Leave-one-family-out robustness for all-16 correlations
           between score and $\Deff/L$.  Entries are Pearson $r$ with
           two-sided $p$-values in parentheses.  MMLU and ARC remain
           negative under every family deletion; HellaSwag weakens after
           dropping Pythia, indicating an era/family confound.}
  \label{tab:capability_loo}
  \resizebox{\linewidth}{!}{%
  \begin{tabular}{lcccc}
    \toprule
    Dropped family & MMLU & ARC & HellaSwag & GSM8K-CoT \\
    \midrule
    Gemma   & $-0.696$ ($0.004$) & $-0.730$ ($0.002$) & $-0.785$ ($0.001$) & $-0.021$ ($0.964$) \\
    Llama   & $-0.636$ ($0.011$) & $-0.663$ ($0.007$) & $-0.719$ ($0.003$) & $-0.305$ ($0.505$) \\
    Mistral & $-0.637$ ($0.011$) & $-0.663$ ($0.007$) & $-0.724$ ($0.002$) & $-0.190$ ($0.683$) \\
    OLMo-2  & $-0.617$ ($0.025$) & $-0.638$ ($0.019$) & $-0.708$ ($0.007$) & $-0.197$ ($0.709$) \\
    Pythia  & $-0.749$ ($0.008$) & $-0.600$ ($0.051$) & $-0.426$ ($0.192$) & -- \\
    Qwen3.5 & $-0.564$ ($0.071$) & $-0.606$ ($0.048$) & $-0.699$ ($0.017$) & $-0.848$ ($0.070$) \\
    \bottomrule
  \end{tabular}%
  }
\end{table}

\lead{Reading.}
The capability check is mainly a negative scope result.  Within the
Qwen3.5 family, lower $\Deff/L$ co-moves with higher MMLU and
ARC-Challenge performance ($r=-0.896$ and $r=-0.913$), with HellaSwag
just below the pre-specified significance threshold.  But the same
within-family axis also changes parameter count and depth, so the
correlation does not establish predictive information beyond scale.
Cross-family modern-only correlations are mixed, HellaSwag is partly
driven by the Pythia-vs-modern era gap, and GSM8K-CoT is unreliable in
this run because several generated-answer extraction scores are
anomalous.  We therefore do not use capability prediction as evidence
for $\Deff$.  The main role of $\Deff$ remains the one used throughout
the paper: diagnosing global residual-stream redundancy and comparing
residual-stream operating regimes.


\section{Alignment-free similarity check: a Gromov-Wasserstein pilot}
\label{app:gw_pilot}

\lead{Motivation.}
$\CKA$ measures second-order geometric similarity on the same samples, and its
invariances are a known limitation. As an alignment-free cross-check, we
compare the layer-pair structure induced by the Gromov-Wasserstein (GW)
distance, which first aligns the metric structure of two point clouds by
optimal transport and therefore does not assume a shared coordinate system,
against the $\CKA$ structure used in the paper.

\lead{Setup.}
For each model we subsample 256 evaluation passages (identical indices across
layers). For each layer we form the intra-cloud Euclidean distance matrix of
the 256 pooled states, normalized by its mean so only relative metric
structure matters. For every layer pair we compute the squared-loss GW
distance with the exact conditional-gradient solver of the Python Optimal
Transport (POT) library. We then correlate the resulting layer-pair GW matrix
with the $\CKA$ matrix over off-diagonal pairs (embedding row dropped).

\begin{table}[h]
\centering
\caption{Gromov-Wasserstein pilot. GW distance is strongly anti-correlated
with $\CKA$ over layer pairs, and increases with layer distance, so the
regime described in the paper is visible under an alignment-free optimal
transport lens as well.}
\label{tab:gw_pilot}
\begin{tabular}{lccc}
\toprule
Model & $\rho_s(\mathrm{GW}, \CKA)$ & $\rho_s(|i{-}j|, \mathrm{GW})$ & $\rho_s(|i{-}j|, \CKA)$ \\
\midrule
Pythia-410M  & $-0.86$ & $+0.45$ & $-0.48$ \\
Llama-3.1-8B & $-0.68$ & $+0.54$ & $-0.89$ \\
OLMo-2-7B    & $-0.47$ & $+0.44$ & $-0.82$ \\
\bottomrule
\end{tabular}
\end{table}

\lead{Reading.}
The qualitative picture is concordant: where $\CKA$ reports high cross-layer
similarity, GW reports small metric distortion, and GW grows with layer
distance. The quantitative differences confirm that the two views capture
different aspects of representational geometry. We nevertheless retain linear
$\CKA$ for the diagnostic itself because the orthogonal-update reference
$F_L = 2L/(L+1)$ exists in closed form for it; to our knowledge no analogous
closed-form reference under residual accumulation is available for GW, so a
GW-based effective depth would currently be uncalibrated. Extending the
calibration theory to transport-based similarities is an interesting
direction for future work.

\end{document}